\documentclass{article}
\usepackage[utf8]{inputenc}
\usepackage{amsmath}
\usepackage{amsfonts}
\usepackage{amssymb}

\usepackage{float}
\usepackage{algorithm,algorithmic}
\usepackage{graphicx}
\usepackage{subcaption}
\usepackage{dirtytalk}
\usepackage{multirow}

\usepackage{./include/arxiv,times}

\usepackage{hyperref}       % hyperlinks
\usepackage{url}            % simple URL typesetting
\usepackage{booktabs}       % professional-quality tables
\usepackage{amsfonts}       % blackboard math symbols
\usepackage{nicefrac}       % compact symbols for 1/2, etc.
\usepackage{microtype}      % microtypography

\usepackage{amsthm}
\makeatletter
\newtheorem*{rep@theorem}{\rep@title}
\newcommand{\newreptheorem}[2]{%
\newenvironment{rep#1}[1]{%
 \def\rep@title{#2 \ref{##1}}%
 \begin{rep@theorem}}%
 {\end{rep@theorem}}}
\makeatother

\newreptheorem{theorem}{Theorem}

\newreptheorem{lemma}{Lemma}

\newreptheorem{corollary}{Corollary}
\newtheorem{proposition}{Proposition}
\newreptheorem{proposition}{Proposition}
\newtheorem{observation}{Observation}
\newreptheorem{observation}{Observation}

\newcommand{\comment}[1]{}

\title{Trusting SVM for Piecewise Linear CNNs}

\author{Leonard Berrada$^1$, Andrew Zisserman$^1$ and M. Pawan Kumar$^{1, 2}$ \\
  $^1$Department of Engineering Science\\
 \hspace{3pt} University of Oxford\\
  $^2$Alan Turing Institute\\
  \texttt{\{lberrada,az,pawan\}@robots.ox.ac.uk} \\
}

\newcommand{\cvx}[1]{#1^{\text{cvx}}}
\newcommand{\ccv}[1]{#1^{\text{ccv}}}

\newcommand{\svm}[1]{#1^{\text{svm}}}
\DeclareMathOperator*{\argmin}{argmin}
\DeclareMathOperator*{\argmax}{argmax}

\newcommand{\ybar}{\bar{y}}

\newcommand{\simplex}{P_n(\mathcal{Y})}

\begin{document}

\maketitle

\begin{abstract}
We present a novel layerwise optimization algorithm for the learning objective of Piecewise-Linear Convolutional Neural Networks (PL-CNNs), a large class of convolutional neural networks. Specifically, PL-CNNs employ piecewise linear non-linearities such as the commonly used ReLU and max-pool, and an SVM classifier as the final layer. The key observation of our approach is that the problem corresponding to the parameter estimation of a layer can be formulated as a difference-of-convex (DC) program, which happens to be a latent structured SVM. We optimize the DC program using the concave-convex procedure, which requires us to iteratively solve a structured SVM problem. This allows to design an optimization algorithm with an optimal learning rate that does not require any tuning. Using the MNIST, CIFAR and ImageNet data sets, we show that our approach always improves over the state of the art variants of backpropagation and scales to large data and large network settings.
\end{abstract}

\section{Introduction}

The backpropagation algorithm is commonly employed to estimate the parameters of a convolutional neural network (CNN) using a supervised training
data set~\citep{williams1986learning}.
Part of the appeal of backpropagation comes from the fact that it is applicable to a wide variety of networks, namely those that have (sub-)differentiable non-linearities
and employ a (sub-)differentiable learning objective. However, the generality of backpropagation comes at the cost of a high sensitivity to its hyperparameters such as the learning rate and momentum. Standard line-search algorithms cannot be used on the primal objective function in this setting, as (i) there may not exist a step-size guaranteeing a monotonic decrease because of the use of sub-gradients, and (ii) even in the smooth case, each function evaluation requires a forward pass over the entire data set without any update, making the approach computationally unfeasible. Choosing the learning rate thus remains an open issue, with the state-of-the-art algorithms suggesting adaptive learning rates~\citep{adagrad,adadelta,adam}.
In addition, techniques such as batch normalization~\citep{batchnorm} and dropout~\citep{dropout} have
been introduced to respectively reduce the sensitivity to the learning rate and to prevent from overfitting.

With this work, we open a different line of inquiry, namely, is it possible to design more robust optimization algorithms for special but useful classes of CNNs?
To this end, we focus on
the networks that are commonly used in computer vision. Specifically, we consider CNNs with convolutional and dense layers
that apply a set of piecewise linear (PL) non-linear operations to obtain a discriminative representation of an input image. While this assumption may sound restrictive at first,
we show that commonly used non-linear operations such as ReLU and max-pool fall under the category of PL functions. The representation obtained in this way is used to classify
the image via a multi-class SVM, which forms the final layer of the network. We refer to this class of networks as PL-CNN.

We design a novel, principled algorithm to optimize the learning objective of a PL-CNN. Our algorithm is a layerwise method, that is, it iteratively updates the parameters of
one layer while keeping the other layers fixed. For this work, we use a simple schedule over the layers, namely, repeated passes from the output layer to the input one. However, it may be possible to further improve the
accuracy and efficiency of our algorithm by designing more sophisticated scheduling strategies.
The key observation of our approach is that the parameter estimation of one layer of PL-CNN can be formulated
as a difference-of-convex (DC) program that can be viewed as a latent structured SVM problem~\citep{latent_svm}. This allows us to solve the
DC program using the concave-convex procedure (CCCP)~\citep{CCCP}.
Each iteration of CCCP requires us to solve a convex structured SVM problem. To this end, we use the powerful block-coordinate Frank-Wolfe (BCFW) algorithm~\citep{bcfw},
which solves the dual of the convex program iteratively by computing the conditional gradients corresponding to a subset of training samples.
In order to further improve BCFW for PL-CNNs, we extend it
in three important ways. First, we introduce a trust-region term that allows us to initialize the BCFW algorithm using the current estimate of the layer parameters. Second,
we reduce the memory requirement of BCFW by an order of magnitude,
via an efficient representation of the feature vectors corresponding to the dense layers. Third, we show that, empirically, the number of constraints of the structural SVM problem can be reduced substantially without any loss in accuracy, which allows us to significantly reduce its time complexity.

Compared to backpropagation~\citep{williams1986learning} or its variants~\citep{adagrad,adadelta,adam}, our algorithm offers three advantages. First, the CCCP algorithm
provides a monotonic decrease in the learning objective at each layer. Since layerwise optimization itself can be viewed as a block-coordinate method, our algorithm
guarantees a monotonic decrease of the overall objective function after each layer's parameters have been updated.
Second, since the dual of the SVM problem is a smooth convex quadratic program, each step of the
BCFW algorithm (in the inner iteration of the CCCP) provides a monotonic increase in its dual objective. Third, since the only step-size required
in our approach comes while solving the SVM dual,
we can use the optimal step-size that is computed analytically during each iteration of BCFW~\citep{bcfw}. In other words, our algorithm has no learning rate, initial or not, that requires
tuning.

Using standard network architectures and publicly available data sets, we show that our algorithm provides a boost over the state of the art variants of backpropagation for learning PL-CNNs and we demonstrate scalability of the method.

\section{Related Work}

While some of the early successful approaches for the optimization of deep neural networks
relied on greedy layer-wise training \citep{hinton2006fast,bengio2007greedy}, most currently used
 methods are variants of backpropagation \citep{williams1986learning} with adaptive learning rates, as discussed in the introduction.

At every iteration, backpropagation performs a forward pass and a backward pass on the network, and updates the parameters of each layer by stochastic or mini-batch gradient descent. This makes the choice of the learning rate critical for efficient optimization. \citet{adagrad} have proposed the Adagrad convex solver, which adapts the learning rate for every direction and takes into account past updates. Adagrad changes the learning rate to favor steps in gradient directions that have not been observed frequently in past updates.
When applied to the non-convex CNN optimization problem, Adagrad may converge prematurely due to a rapid decrease in the learning rate \citep{Goodfellow-et-al-2016-Book}.
In order to prevent this behavior, the Adadelta algorithm~\citep{adadelta} makes the decay of the learning rate slower. It is worth noting that this fix is
empirical, and to the best of our knowledge, provides no theoretical guarantees. \citet{adam} propose a different scheme for the learning rate, called Adam, which uses an online
estimation of the first and second moments of the gradients to provide centered and normalized updates. However all these methods still require the tuning of the initial learning rate to perform well.

Second-order and natural gradient optimization methods have also been a subject of attention. The focus in this line of work has been to come up with appropriate approximations to make the updates cheaper. \citet{hessian_free} suggested a Hessian-free second order optimization using finite differences to approximate the Hessian and conjugate gradient to compute the update. \citet{martens2015optimizing} derive an approximation of the Fisher matrix inverse, which provides a more efficient method for natural gradient descent. \citet{riemannian_nets} explore a set of Riemannian methods based on natural gradient descent and quasi-Newton methods to guarantee reparametrization invariance of the problem. \citet{desjardins2015natural} demonstrate a scaled up natural gradient descent method by training on the ImageNet data set \citep{ILSVRC15}. Though providing more informative updates and solid theoretical support than SGD-based approaches, these methods do not take into account the structure of the problem offered by the commonly used non-linear operations.

Our work is also related to some of the recent developments in optimization for deep learning. For example, \citet{admm} use ADMM for massive distribution of computation
in a layer-wise fashion, and in particular their method will yield closed-form updates for any PL-CNN. \citet{lee2015difference} propose to use targets
instead of gradients to propagate information through the network, which could help to extend our algorithm. \citet{zhang2016convexified} derive
a convex relaxation for the learning objective for a restricted class of CNNs, which also relies on solving an approximate convex problem. In \citep{amos2016input}, the authors identify convex problems for the inference task, when the neural network is a convex function of some of its inputs.

With a more theoretical approach, \citet{goel2016reliably} propose an algorithm to learn shallow ReLU nets with guarantees of time convergence and generalization error. \citet{heinemann2016improper} show that a subclass of neural networks can be modeled as an improper kernel, which then reduces the learning problem to a simple SVM with the constructed kernel.

More generally, we believe that our hitherto
unknown observation regarding the relationship between PL-CNNs and latent SVMs can (i) allow the progress made in one field to be transferred to the other and (ii) help design a new generation of principled algorithms for deep learning optimization.

\section{Piecewise Linear Convolutional Neural Networks}

A piecewise linear convolutional neural network (PL-CNN) consists of a series of convolutional layers, followed by a series of dense layers, which provides a concise
representation of an input image. Each layer of the network performs two operations: a linear transformation (that is, a convolution or a matrix multiplication),
followed by a piecewise linear non-linear operation such as ReLU or max-pool. The resulting representation of the image is used for classification via an SVM.
In the remainder of this section, we provide a formal description of PL-CNN.

\paragraph{Piecewise Linear Functions.} A piecewise linear (PL) function $f({\bf u})$ is a function of the following form~\citep{Melzer1986}:
\begin{equation}
f({\bf u}) = \max_{i \in [m]} \{{\bf a}_i^\top{\bf u}\} - \max_{j \in [n]} \{{\bf b}_j^\top{\bf u}\},
\end{equation}
where $[m] = \{1,\cdots,m\}$, and $[n] = \{1,\cdots,n\}$. Each of the two maxima above is a convex function, therefore such a function $f$ is not generally convex, but it is rather a difference of two convex functions.
Importantly, many commonly used non-linear operations such as ReLU or max-pool are PL functions of their input. For example, ReLU corresponds
to the function $R(v) = \max\{v,0\}$ where $v$ is a scalar. Similarly, max-pool for a $D$-dimensional vector ${\bf u}$ corresponds to
$M({\bf u}) = \max_{i \in [D]} \{{\bf e}_i^\top{\bf u}\}$, where ${\bf e}_i$ is a vector whose $i$-th element is $1$ and all other elements are $0$.
Given a value of {\bf u}, we say that $(i^*,j^*)$ is the activation of
the PL function at ${\bf u}$ if $i^* = \argmax_{i \in [m]} \{{\bf a}_i^\top{\bf u}\}$ and
$j^* = \argmax_{j \in [n]} \{{\bf b}_j^\top{\bf u}\}$.

\paragraph{PL-CNN Parameters.} We denote the parameters of an $L$ layer PL-CNN by $\mathcal{W} = \{W^l ;l \in [L]\}$. In other words, the parameters of the
$l$-th layer is defined as $W^l$.
The CNN defines a composite function, that is, the output ${\bf z}^{l-1}$ of layer $l-1$ is the input to the layer $l$.
Given the input ${\bf z}^{l-1}$ to layer $l$, the output is computed as ${\bf z}^l = \sigma^l(W^l \cdot {\bf z}^{l-1})$,
where \say{$\cdot$} is either a convolution or a matrix multiplication, and $\sigma^l$ is a PL non-linear function, such as
ReLU or max-pool.
The input to the first layer is an image ${\bf x}$, that is, ${\bf z}^0 = {\bf x}$. We denote the input to the final layer
by ${\bf z}^L = \Phi({\bf x}; \mathcal{W}) \in \mathbb{R}^D$.
In other words, given an image {\bf x}, the convolutional and dense layers of a PL-CNN provide a $D$-dimensional representation of {\bf x} to the final classification layer.
The final layer of a PL-CNN is a $C$ class SVM $W^{\text{svm}}$, which specifies one parameter $W^{\text{svm}}_y \in \mathbb{R}^D$
for each class $y \in \mathcal{Y}$.

\paragraph{Prediction.} Given an image {\bf x}, a PL-CNN predicts its class using the following rule:
\begin{equation}
y^* = \argmax_{y \in \mathcal{Y}}  W^{\text{svm}}_y \Phi({\bf x}; \mathcal{W}).
\end{equation}
In other words, the dot product of the $D$-dimensional representation of {\bf x} with the SVM parameter for a class $y$ provides the score for the class. The desired
prediction is obtained by maximizing the score over all possible classes.

\paragraph{Learning Objective.} Given a training data set $\mathcal{D} = \{({\bf x}_i,y_i),i \in [N]\}$, where ${\bf x}_i$ is the input image and $y_i$ is its ground-truth
class, we wish to estimate the parameters $\mathcal{W}\cup W^{\text{svm}}$ of the PL-CNN. To this end, we minimize a regularized upper bound on the empirical risk. The risk
of a prediction $y_i^*$ given the ground-truth $y_i$ is measured with a user-specified loss function $\Delta(y_i^*,y_i)$. For example, the standard $0-1$ loss has a value
of $0$ for a correct prediction and $1$ for an incorrect prediction. Formally, the parameters of a PL-CNN are estimated using the following learning
objective:
\begin{align}
\label{eq:obj_fun}
\min\limits_{\mathcal{W}, \svm{W}} \dfrac{\lambda}{2} \sum\limits_{l \in [L] \cup \{\text{svm}\}} \| W^{l} \|_F^2 + \dfrac{1}{N} \sum\limits_{i=1}^N \max\limits_{\ybar_i \in \mathcal{Y}} \left( \Delta(\ybar_i, y_i) + \big( W_{\ybar_i}^{\text{svm }} - W_{y_i}^{\text{svm }} \big)^T \Phi({\bf x}_i; \mathcal{W}) \right).
\end{align}
The hyperparameter $\lambda$ denotes the relative weight of the regularization compared to the upper bound of the empirical risk.
Note that, due to the presence of piecewise linear non-linearities, the representation $\Phi(\cdot; \mathcal{W})$ (and hence, the above objective)
is highly non-convex in the PL-CNN parameters.

\section{Parameter Estimation for PL-CNN}
In order to enable layerwise optimization of PL-CNNs, we show that parameter estimation of a layer can be formulated as
a difference-of-convex (DC) program (subsection~\ref{subsec:dc_prog}). This allows us to use the concave-convex procedure, which
solves a series of convex optimization problems (subsection~\ref{subsec:cccp}). We show that
each convex problem closely resembles a structured SVM
objective, which can be addressed by the powerful block-coordinate Frank-Wolfe (BCFW) algorithm. We extend BCFW to improve its initialization,
time complexity and memory requirements, thereby enabling its use in learning PL-CNNs (subsection~\ref{subsec:bcfw}).
For the sake of clarity, we only provide sketches of the proofs for those propositions
that are necessary for understanding the paper. The detailed proofs of the remaining propositions are provided in the Appendix.

\subsection{Layerwise Optimization as a DC Program}
\label{subsec:dc_prog}

Given the values of the parameters for the convolutional and the dense layers (that is, $\mathcal{W}$), the learning objective~(\ref{eq:obj_fun}) is the
standard SVM problem in parameters $W^{\text{svm}}$. In other words, it is a convex optimization problem with several efficient solvers~\citep{tsochantaridisicml04, joachims2009cutting, shwartzicml07}, including
the BCFW algorithm~\citep{bcfw}. Hence, the optimization of the final layer is a computationally easy problem. In contrast, the optimization of the parameters of
a convolutional or a dense layer $l$ does not result in a convex program. In general, this problem can be arbitrarily hard to solve. However, in the case of
PL-CNN, we show that the problem can be formulated as a specific type of DC program, which enables efficient optimization via the iterative use of BCFW.
The key property that enables our approach is the following proposition that shows that the composition of PL functions is also a PL function.

\begin{proposition}
\label{prop:pl_composition}
Consider PL functions $g: \mathbb{R}^m\rightarrow\mathbb{R}$ and
$g_i:\mathbb{R}^n \rightarrow \mathbb{R}$, for all $i \in [m]$. Define a function $f:\mathbb{R}^n\rightarrow\mathbb{R}$ as
$f({\bf u}) = g([g_1({\bf u}), g_2({\bf u}), \cdots, g_m({\bf u})]^\top)$. Then $f$ is also a PL function
(proof in Appendix \ref{sec:proof_pl}).
\end{proposition}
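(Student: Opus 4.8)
The plan is to prove that the class of PL functions is closed under the three operations needed to assemble $f$ out of $g$ and the $g_i$: linear combination, pointwise maximum, and subtraction. Once these closure properties are established, the composition follows immediately by unfolding the outer function $g$ and substituting the inner functions coordinate by coordinate.

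First I would record the elementary closure under addition and scalar multiplication. The scalar identity $\max_i \{\alpha_i\} + \max_k \{\beta_k\} = \max_{i,k}\{\alpha_i + \beta_k\}$ shows that the sum of two maxima of linear forms is again a maximum of linear forms, so adding two functions of the form $\max_{i}\{\mathbf{a}_i^\top\mathbf{u}\} - \max_{j}\{\mathbf{b}_j^\top\mathbf{u}\}$ yields another difference of two such maxima, hence a PL function. Multiplication by a nonnegative scalar simply rescales the linear forms, while multiplication by $-1$ swaps the two maxima (the convex and concave parts); thus the class is closed under arbitrary finite linear combinations and, in particular, under subtraction.

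The crux of the argument, and the step I expect to be the main obstacle, is closure under the pointwise maximum. The difficulty is that the maximum of two differences of convex functions is not \emph{a priori} itself a difference of convex functions. I would resolve this with the algebraic identity obtained from $\max(a,b)=\max(a+c,b+c)-c$ with $c = G_1+G_2$: writing each PL function as $F_r - G_r$, where $F_r$ and $G_r$ are convex (each a maximum of linear forms), one has
\[
\max(F_1 - G_1,\, F_2 - G_2) = \max(F_1 + G_2,\, F_2 + G_1) - (G_1 + G_2).
\]
Here $F_1 + G_2$ and $F_2 + G_1$ are sums of convex PL functions, hence convex PL by the first step, and their pointwise maximum is again a maximum of linear forms (one simply takes the union of the two underlying sets of linear forms); likewise $G_1 + G_2$ is convex PL. Therefore the right-hand side is a difference of two convex PL functions, i.e.\ a PL function. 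A straightforward induction extends this to the maximum over finitely many PL functions, and the analogous statement for the minimum follows from $\min(\cdot) = -\max(-\,\cdot)$.

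With these tools the composition is direct. I would write the outer PL function as $g(\mathbf{v}) = \max_{p}\{\mathbf{c}_p^\top \mathbf{v}\} - \max_{q}\{\mathbf{d}_q^\top \mathbf{v}\}$ and substitute $v_k = g_k(\mathbf{u})$. Each inner expression $\mathbf{c}_p^\top \mathbf{v} = \sum_{k} c_{p,k}\, g_k(\mathbf{u})$ is a linear combination of the PL functions $g_k$, hence PL by the first step, and the same holds for each $\mathbf{d}_q^\top \mathbf{v}$. Taking the maximum over $p$ (respectively over $q$) then produces a PL function by the maximum-closure step, and subtracting the two resulting PL functions gives $f$, which is PL by the first step. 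This shows $f$ is a PL function and completes the plan.
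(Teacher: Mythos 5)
Your proposal is correct. It rests on the same underlying trick as the paper's Appendix~\ref{sec:proof_pl} proof --- shifting the arguments of a maximum by the sum of the concave parts so that everything inside the max becomes convex --- but the two proofs are organized quite differently. The paper substitutes the inner functions into $g$ and performs one monolithic rearrangement via the identity $\max_j\{c_j - d_j\} = \max_j\{c_j + \sum_{j'\neq j} d_{j'}\} - \sum_{j'} d_{j'}$, which has the virtue of producing an \emph{explicit} DC decomposition of $f$ in closed form; the authors rely on this constructive output downstream (it is what underlies the ``DC networks'' of Appendix~\ref{sec:feat_vectors} used to compute feature vectors). You instead factor the argument into reusable closure lemmas (linear combinations first, then pointwise maxima via $\max(F_1-G_1,F_2-G_2)=\max(F_1+G_2,F_2+G_1)-(G_1+G_2)$) and apply them recursively; this is more abstract and less immediately constructive, but it is cleaner and, on one point, more careful than the paper: when the outer coefficients $c_{p,k}$ (the paper's $a_{j,i}$) are negative, a term such as $\sum_k c_{p,k}\, g_k^+(\mathbf{u})$ appearing inside the paper's final maximum is only a difference of convex functions rather than a pointwise maximum of linear forms, so the paper's closing sentence implicitly needs one further application of the same rearrangement. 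Your linear-combination step absorbs the sign by swapping the convex and concave parts \emph{before} the maximum is taken, so no such patch is required. In short: same key identity, but your modular route trades the paper's explicit decomposition for a tighter and more easily checked argument.
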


Using the above proposition, we can reformulate the problem of optimizing the parameters of one layer of the network as a DC program. Specifically, the
following proposition shows that the problem can be formulated as a latent structured SVM objective~\citep{latent_svm}.

\begin{proposition}
\label{prop:layerwise_prob}
The learning objective of a PL-CNN with respect to the parameters of the $l$-th layer can be specified as follows:
\begin{align}
\label{eq:dc_prog}
\min\limits_{W^l} \dfrac{\lambda}{2} \| W^{l} \|_F^2 + \dfrac{1}{N} \sum\limits_{i=1}^N \max\limits_{\substack{\overline{\bf h}_i \in \mathcal{H} \\ \ybar_i \in \mathcal{Y}}}
\left( \Delta(\ybar_i, y_i) + (W^l)^\top\Psi({\bf x}_i,\ybar_i,\overline{\bf h}_i) \right) - \max\limits_{{\bf h}_i \in \mathcal{H}}
\left( (W^l)^\top \Psi({\bf x}_i,y_i,{\bf h}_i) \right),
\end{align}
for an appropriate choice of the latent space $\mathcal{H}$ and joint feature vectors $\Psi({\bf x},y,{\bf h})$ of the input {\bf x},
the output $y$ and the latent variables {\bf h}. In other words, parameter estimation for the $l$-th layer corresponds to minimizing the sum of
its Frobenius norm plus a PL function for each training sample.
\end{proposition}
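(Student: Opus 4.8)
The plan is to fix every layer other than the $l$-th, so that the only free variable is $w = \mathrm{vec}(W^l)$, and to show that in this variable each component of the representation $\Phi(\mathbf{x}_i; \mathcal{W})$ is a PL function; the difference-of-convex structure then follows directly from the definition of PL functions. First I would note that the pre-activation of layer $l$, namely $W^l \cdot \mathbf{z}^{l-1}$, is a linear (hence PL) function of $w$, because $\mathbf{z}^{l-1}$ depends only on the fixed lower layers. I would then argue by induction on the layer index $l' \geq l$ that every component of $\mathbf{z}^{l'}$ is PL in $w$: given that each component of $\mathbf{z}^{l'-1}$ is PL in $w$, each pre-activation $(W^{l'} \mathbf{z}^{l'-1})_k$ is a fixed linear combination of PL functions and is therefore PL, and applying the scalar PL non-linearity $\sigma^{l'}_k$ on top of these is exactly the composition covered by Proposition~\ref{prop:pl_composition}. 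Taking $l' = L$ shows that each coordinate $\Phi_k(\mathbf{x}_i; \mathcal{W})$ is PL in $w$.

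Next I would lift this to the per-sample term $\ell_i(w) = \max_{\ybar_i \in \mathcal{Y}} ( \Delta(\ybar_i, y_i) + (W^{\text{svm}}_{\ybar_i} - W^{\text{svm}}_{y_i})^\top \Phi(\mathbf{x}_i; \mathcal{W}) )$. For a fixed label, the score $(W^{\text{svm}}_{\ybar_i} - W^{\text{svm}}_{y_i})^\top \Phi$ is a linear combination, with fixed coefficients of arbitrary sign, of the PL functions $\Phi_k$; since PL functions are closed under linear combination, this score is PL in $w$, and adding the constant $\Delta(\ybar_i, y_i)$ keeps it PL. Because PL functions are also closed under pointwise maximum over a finite index set, $\ell_i$ is itself PL in $w$, and can therefore be written as a difference of two convex PL functions, i.e. $\ell_i(w) = \max_{p} \{ c_p^\top w + d_p \} - \max_{q} \{ e_q^\top w \}$ for finite families of affine pieces. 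I would arrange the decomposition so that all additive constants are carried by the first (convex) maximum, leaving the subtracted maximum homogeneous in $w$, matching the form of the target.

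Finally I would read off the latent structured SVM. I would take the latent space $\mathcal{H}$ to index the joint activation patterns of all the PL non-linearities in layers $l$ through $L$, since fixing such a pattern renders $\Phi$ — and hence each affine piece above — a fixed linear function of $w$. Each piece $p$ of the convex part corresponds to a choice of label $\ybar_i$ together with an activation pattern $\overline{\mathbf{h}}_i \in \mathcal{H}$, so defining $\Psi(\mathbf{x}_i, \ybar_i, \overline{\mathbf{h}}_i)$ as the (reshaped) coefficient $c_p$ lets $(W^l)^\top \Psi$ reproduce the linear part via the Frobenius product, with the constant $d_p$ collapsing to $\Delta(\ybar_i, y_i)$; the subtracted part, being independent of the maximizing label and thus attached to the ground truth $y_i$, is identified with $\max_{\mathbf{h}_i \in \mathcal{H}} (W^l)^\top \Psi(\mathbf{x}_i, y_i, \mathbf{h}_i)$ for the same $\Psi$. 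Summing over $i$ and restoring the regularizer $\frac{\lambda}{2}\|W^l\|_F^2$ yields (\ref{eq:dc_prog}).

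The step I expect to be the main obstacle is the bookkeeping in this last identification: exhibiting a single, consistent feature map $\Psi$ and latent space $\mathcal{H}$ that simultaneously realize both the convex and the subtracted concave parts, and checking that the additive constants really do reduce to the task loss $\Delta(\ybar_i, y_i)$ alone. The latter requires handling the constant offsets contributed by the fixed biases of the upper layers (for instance by folding them into an augmented representation), and verifying that the subtracted maximum depends only on $y_i$ rather than on the loss-augmented label.
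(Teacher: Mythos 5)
Your argument follows the same route as the paper's own (existential) proof: fix all other layers, use Proposition~\ref{prop:pl_composition} inductively to show $\Phi({\bf x}_i;\mathcal{W})$ and hence the loss-augmented hinge term are PL in $W^l$, then invoke the DC decomposition of PL functions and identify the latent variables with activation patterns and the feature vectors with the coefficients (subgradients) of the resulting affine pieces. The bookkeeping you flag as the main obstacle is exactly what the paper also defers --- it declares the sketch an existential proof and relegates the explicit construction of $\Psi$ and $\mathcal{H}$ to Appendix~\ref{sec:feat_vectors} --- so your proposal is correct and matches the intended argument.
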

\begin{proof}
For a given image {\bf x} with the ground-truth class $y$,
consider the input to the layer $l$, which we denote by ${\bf z}^{l-1}$. Since all the layers except the $l$-th one are fixed, the
input ${\bf z}^{l-1}$ is a constant vector, which only depends on the image {\bf x} (that is, its value does not depend on the variables $W^l$). In other
words, we can write ${\bf z}^{l-1} = \varphi({\bf x})$.

Given the input ${\bf z}^{l-1}$, all the elements of the output of the $l$-th layer, denoted by ${\bf z}^l$, are
a PL function of $W^l$ since the layer performs a linear transformation of ${\bf z}^{l-1}$ according to the parameters $W^l$, followed by an application of
PL operations such as ReLU or max-pool. The vector ${\bf z}^l$ is then fed to the $(l+1)$-th layer. The output ${\bf z}^{l+1}$ of the $(l+1)$-th layer is a vector
whose elements are PL functions of ${\bf z}^l$. Therefore, by proposition~(\ref{prop:pl_composition}), the elements of
${\bf z}^{l+1}$ are a PL function of $W^l$.
By applying the same argument until we reach the layer $L$, we can conclude that the representation $\Phi({\bf x}; \mathcal{W})$
is a PL function of $W^l$.

Next, consider the upper bound of the empirical risk, which is specified as follows:
\begin{equation}
\max\limits_{\ybar \in \mathcal{Y}} \left( \Delta(\ybar, y) + \big( W_{\ybar}^{\text{svm }} - W_{y}^{\text{svm }} \big)^T \Phi({\bf x}; \mathcal{W}) \right).
\end{equation}
Once again, since $W^{\text{svm}}$ is fixed, the above upper bound can be interpreted as a PL function of $\Phi({\bf x}; \mathcal{W})$, and thus,
by proposition~(\ref{prop:pl_composition}), the upper bound is a PL function of $W^l$. It only remains to observe that the learning
objective~(\ref{eq:obj_fun}) also contains the Frobenius norm of $W^l$. Thus, it follows that the estimation of the parameters of layer $l$ can be
reformulated as minimizing the sum of its Frobenius norm and the PL upper bound of the empirical risk over all training samples, as shown in
problem~(\ref{eq:dc_prog}). Note that we have ignored the constants corresponding to the Frobenius norm of the parameters of all the fixed layers. This constitutes an existential proof of Proposition \ref{prop:layerwise_prob}. In the next paragraph, we give an intuition about the feature vectors $\Psi({\bf x}_i,\ybar_i,\overline{\bf h}_i)$ and the latent space $\mathcal{H}$.
\end{proof}

\paragraph{Feature Vectors \& Latent Space.} The exact form of the joint feature vectors depends on the explicit DC decomposition of the objective function. In Appendix~\ref{sec:feat_vectors}, we detail the practical computations and give an example: we construct two interleaved neural networks whose outputs define the convex and concave parts of the DC objective function. Given the explicit DC objective function, the feature vectors are given by a subgradient and can therefore be obtained by automatic differentiation.

We now give an intuition of what the latent space $\mathcal{H}$ represents. Consider an input image {\bf x} and
a corresponding latent variable ${\bf h} \in {\cal H}$. The latent variable can be viewed as a set of variables ${\bf h}^k, k \in \{l+1,\cdots,L\}$.
In other words, each subset ${\bf h}^k$ of the latent variable corresponds to one of the layers of the network that follow the layer $l$. Intuitively,
${\bf h}^k$ represents the choice of activation at layer $k$ when going through the PL activation: for each neuron $j$ of layer $k$, ${\bf h}^k_j$ takes value $i$ if and only if the $i$-th piece of the piecewise linear activation is selected. For instance, $i$ is the index of the selected input in the case of a max-pooling unit.

Note that the latent space only depends on the layers that follow the current layer being optimized. This is due to the fact that the input ${\bf z}^{l-1}$
to the $l$-th layer is a constant vector that does not depend on the value of $W^l$. However, the activations of all subsequent layers following the $l$-th one
depend on the value of the parameters $W^l$. As a consequence, the greater the number of following layers,
the greater the size of the latent space, and this growth happens to be exponential. However, as will be seen shortly, it is still possible to efficiently optimize problem~(\ref{eq:dc_prog}) for all the layers of the network despite this exponential increase.

\subsection{Concave-Convex Procedure}
\label{subsec:cccp}

The optimization problem~(\ref{eq:dc_prog}) is a DC program in the parameters $W^l$. This follows from the fact that the upper bound of the empirical risk
is a PL function, and can therefore be expressed as the difference of two convex PL functions~\citep{Melzer1986}. Furthermore, the Frobenius norm of
$W^l$ is also a convex function of $W^l$. This observation allows us to obtain an approximate solution of problem~(\ref{eq:dc_prog}) using the iterative
concave-convex procedure (CCCP)~\citep{CCCP}.

Algorithm~\ref{algo:cccp} describes the main steps of CCCP. In step 3, we impute the best value of the
latent variable corresponding to the ground-truth class $y_i$ for each training sample. This imputation corresponds to the linearization step of the CCCP. The selected latent variable corresponds to a choice of activations at each non-linear layer of the network, and therefore defines a path of activations to the ground truth. Next, in step 4, we update the parameters by solving a convex optimization problem. This convex problem amounts to finding the path of activations which minimizes the maximum margin violations given the path to the ground truth defined in step 3.

The CCCP algorithm has the desirable property of providing a monotonic decrease in the objective function at each iteration. In other words, the objective function value
of problem~(\ref{eq:dc_prog}) at $W^l_t$ is greater than or equal to its value at $W^l_{t+1}$. Since layerwise optimization itself can be viewed as a block-coordinate algorithm
for minimizing the learning objective~(\ref{eq:obj_fun}), our overall algorithm provides guarantees of monotonic decrease until convergence.
This is one of the main advantages of our approach compared to backpropagation and its variants, which fail to provide similar guarantees on the value of the objective function
from one iteration to the next.
\begin{algorithm}[h!]
\caption{\em \it CCCP for parameter estimation of the $l$-th layer of the PL-CNN.}
\begin{algorithmic}[1]
\REQUIRE Data set ${\cal D} = \{({\bf x}_i,y_i),i \in [N]\}$, fixed parameters $\{\mathcal{W} \cup W^{\text{svm}}\}\backslash W^l$, initial
estimate $W^l_0$.
\STATE t = 0
\REPEAT
\STATE For each sample $({\bf x}_i,y_i)$, find the best latent variable value by solving the following problem:
\begin{equation}
{\bf h}_i^* = \argmax_{\overline{\bf h} \in {\cal H}} (W^l_t)^\top\Psi({\bf x}_i,y_i,\overline{\bf h}).
\label{eq:completion_step}
\end{equation}
\STATE Update the parameters by solving the following convex optimization problem:
\begin{align}
\label{eq:conv_prog}
W^l_{t+1} = \argmin\limits_{W^l} \dfrac{\lambda}{2} \| W^{l} \|_F^2 + \dfrac{1}{N} \sum\limits_{i=1}^N \max\limits_{\substack{\ybar_i \in \mathcal{Y} \\ \overline{\bf h}_i \in \mathcal{H}}}
\left( \Delta(\ybar_i, y_i) + (W^l)^\top\Psi({\bf x}_i,\ybar_i,\overline{\bf h}_i) \right) - \nonumber \\
\left( (W^l)^\top \Psi({\bf x}_i,y_i,{\bf h}_i^*) \right).
\end{align}
\STATE t = t+1
\UNTIL Objective function of problem~(\ref{eq:dc_prog}) cannot be improved beyond a specified tolerance.
\end{algorithmic}
\label{algo:cccp}
\end{algorithm}

In order to solve the convex program~(\ref{eq:conv_prog}), which corresponds to a structured SVM problem, we make use of the powerful BCFW algorithm~\citep{bcfw} that
solves its dual via conditional gradients.
This has two main advantages: (i) as the dual is a smooth quadratic program, each iteration of BCFW provides a monotonic increase in its objective; and
(ii) the optimal step-size at each iteration can be computed analytically. This is once again in stark contrast to backpropagation, where the estimation of the step-size
is still an active area of research~\citep{adagrad,adadelta,adam}. As shown by~\citet{bcfw}, given the current estimate of the parameters $W^l$, the conditional gradient of the dual
of program~(\ref{eq:conv_prog}) with respect to a training sample $({\bf x}_i,y_i)$ can be obtained by solving the following problem:
\begin{equation}
(\hat{y}_i,\hat{\bf h}_i) = \argmax_{\ybar \in {\cal Y}, \overline{\bf h} \in {\cal H}} (W^l)^\top\Psi({\bf x}_i,\ybar,\overline{\bf h}) + \Delta(\ybar,y_i).
\label{eq:cond_grad}
\end{equation}
We refer the interested reader to~\citep{bcfw} for further details.

The overall efficiency of the CCCP algorithm relies on our ability to solve problems~(\ref{eq:completion_step}) and~(\ref{eq:cond_grad}). At first glance, these
problems may appear to be computationally intractable as the latent space ${\cal H}$ can be very large, especially for layers close to the input (of the order of millions
of dimensions for a typical network). However, the following proposition shows that both the problems can be solved efficiently using the forward and backward
passes that are employed in backpropagation.

\begin{proposition}
\label{prop:gradients}
Given the current estimate $W^l$ of the parameters for the $l$-th layer, as well as the parameter values of all the other fixed layers,
problems~(\ref{eq:completion_step}) and~(\ref{eq:cond_grad}) can be solved using a forward pass on the network.
Furthermore, the joint feature vectors
$\Psi({\bf x}_i,\hat{y}_i,\hat{\bf h}_i)$ and $\Psi({\bf x}_i,y_i,{\bf h}_i^*)$ can be computed using a backward pass on the network.
\end{proposition}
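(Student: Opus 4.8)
The plan is to exploit the compositional structure established in Proposition~\ref{prop:pl_composition} to show that the combinatorial maximization over the latent space ${\cal H}$ collapses onto the local maximizations already performed by the piecewise linear non-linearities during a forward pass. First I would recall, from the construction of the feature vectors and the latent space, that for a fixed activation pattern $\overline{\bf h}$ (and a fixed label $\ybar$) the quantity $(W^l)^\top\Psi({\bf x}_i, \ybar, \overline{\bf h})$ is exactly the linear function of $W^l$ obtained by clamping every ReLU and max-pool in layers $l+1, \ldots, L$ (together with the SVM scoring for $\ybar$) to the piece indexed by $\overline{\bf h}$. Maximizing this linear function over all $\overline{\bf h}$ then recovers the convex piecewise linear function that the network actually evaluates, since each term $\max_i\{{\bf a}_i^\top{\bf u}\}$ inside a PL unit simply returns the value of its maximizing piece.

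Second, I would argue by induction on the layer index that the layer-by-layer greedy selection made by the forward pass yields the global maximizer over ${\cal H}$. Given the constant input ${\bf z}^{l-1} = \varphi({\bf x}_i)$ and the current $W^l$, propagating forward fixes the activation of layer $l$; feeding the result into layer $l+1$ fixes its activation; and so on up to layer $L$. Because Proposition~\ref{prop:pl_composition} guarantees that the composition remains PL with its convex part assembled from these nested maxima, the activation selected locally at each layer is precisely the one attaining the outer maximum, so a single forward pass solves~(\ref{eq:completion_step}) and returns ${\bf h}_i^*$.

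Third, for the loss-augmented decoding~(\ref{eq:cond_grad}) I would observe that the label $\ybar$ enters only at the final SVM layer, so the same forward pass fixes the optimal activations of layers $l+1, \ldots, L$ and hence the inner maximum over $\overline{\bf h}$, independently of $\ybar$. The joint maximum over $(\ybar, \overline{\bf h})$ then factorizes: after the forward pass one simply adds the margin $\Delta(\ybar, y_i)$ to each of the $|{\cal Y}|$ class scores and takes the best, an $O(|{\cal Y}|)$ enumeration that yields $(\hat{y}_i, \hat{\bf h}_i)$. Finally, once $\hat{\bf h}_i$ (and $\hat{y}_i$, respectively ${\bf h}_i^*$) is fixed, the vector $\Psi({\bf x}_i, \hat{y}_i, \hat{\bf h}_i)$ is a subgradient at $W^l$ of the relevant convex piecewise linear function; as noted in the Feature Vectors paragraph, such a subgradient is exactly what automatic differentiation returns, i.e.\ a backward pass through the network with the activations held at their selected pieces, and likewise for $\Psi({\bf x}_i, y_i, {\bf h}_i^*)$.

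I expect the main obstacle to be rigorously certifying that the greedy, per-layer maximization of the forward pass coincides with the \emph{global} argmax over the exponentially large space ${\cal H}$, rather than merely a coordinatewise optimum. The crux is that the DC decomposition must route each non-linearity's maximum consistently into the convex part, so that the nested maxima of the composition in Proposition~\ref{prop:pl_composition} are jointly, and not only separately, maximized by forward propagation; the induction in the second step is precisely where this consistency has to be pinned down.
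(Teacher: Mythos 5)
Your proposal follows essentially the same route as the paper's sketch: the maximization over $\mathcal{H}$ reduces to selecting the active linear piece of each ReLU/max-pool in layers $l+1,\dots,L$, which is exactly what a forward pass computes, and the feature vector is the corresponding subgradient, which is exactly what a backward pass computes. The consistency issue you rightly flag as the crux --- that greedy per-layer selection attains the \emph{global} argmax over $\mathcal{H}$ rather than a coordinatewise one --- is not addressed in the paper's sketch either, but is secured by the explicit construction of Appendix~\ref{sec:feat_vectors}: the convex part is propagated through the non-negative weights $W^+,W^-$ and through $\max_k\bigl(\cvx{u_k}+\sum_{i\neq k}\ccv{u_i}\bigr)$, both monotone non-decreasing in their inputs, so maximizing each unit's output locally does maximize the composite jointly.
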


\begin{proof}
Recall that the latent space consists of the putative activations for each PL operation in the layers following the current one. Thus,
intuitively, the maximization over the latent variables corresponds to finding the exact activations of all such PL operations.
In other words, we need to identify the indices of the linear pieces that are used to compute the value of the PL function in the current state
of the network. For a ReLU operation, this corresponds to estimating
$\max\{0,v\}$, where the input to the ReLU is a scalar $v$. Similarly, for a max-pool operation, this corresponds to
estimating $\max_{i} \{{\bf e}_i^\top{\bf u}\}$, where ${\bf u}$ is
the input vector to the max-pool. This is precisely the computation that the forward pass of backpropagation performs. Given the activations, the joint feature
vector is the subgradient of the sample with respect to the current layer. Once again, this is precisely what is computed during the backward pass of the backpropagation
algorithm.
\end{proof}

An example is constructed in Appendix \ref{sec:feat_vectors} to illustrate how to compute the feature vectors in practice.

\subsection{Improving the BCFW Algorithm}
\label{subsec:bcfw}
As the BCFW algorithm was originally designed to solve a structured SVM problem, it requires further extensions to be suitable for training a PL-CNN. In what follows, we present
three such extensions that improve the initialization, memory requirements and time complexity of the BCFW algorithm respectively.

\paragraph{Trust-Region for Initialization.} The original BCFW algorithm starts with an initial parameter $W^l = {\bf 0}$ (that is, all the parameters are set to 0). The
reason for this initialization is that it is possible to compute the dual variables that correspond to the {\bf 0} primal variable. However, since our algorithm visits
each layer of the network several times, it would be desirable to initialize its parameters using its current value $W_l^t$. To this end, we introduce a trust-region in the
constraints of problem~(\ref{eq:conv_prog}), or equivalently, an $\ell_2$ norm based proximal term in its objective function~\citep{parikhfnt14}.
The following proposition shows that this has the desired effect of initializing the BCFW algorithm close to the current parameter values.
\begin{proposition}
\label{prop:warm_start}
By adding a proximal term $\frac{\mu}{2} \| W^{l} - W_t^{l}\|_F^2$ to the objective function in~(\ref{eq:conv_prog}), we can compute a feasible dual solution whose
corresponding primal solution is equal to $\frac{\mu}{\lambda + \mu}W_t^l$. Furthermore, the addition of the proximal term still allows us to efficiently compute the
conditional gradient using a forward-backward pass (proof in Appendix~\ref{sec:dual}).
\end{proposition}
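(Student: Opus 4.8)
The plan is to show that the Frobenius regularizer and the added proximal term combine into a single quadratic that is again a structured-SVM regularizer, but with a larger strength and a shifted centre; the claim then follows from the standard zero-initialization of BCFW expressed in shifted coordinates. First I would expand and complete the square on the two quadratic terms:
\begin{equation}
\frac{\lambda}{2}\|W^l\|_F^2 + \frac{\mu}{2}\|W^l - W_t^l\|_F^2 = \frac{\lambda+\mu}{2}\Big\|W^l - \tfrac{\mu}{\lambda+\mu}W_t^l\Big\|_F^2 + \text{const},
\end{equation}
where the constant does not depend on $W^l$ and may be discarded. Hence adding the proximal term is equivalent to replacing the regularization constant $\lambda$ by $\lambda+\mu$ and re-centring the regularizer at $\frac{\mu}{\lambda+\mu}W_t^l$ rather than at ${\bf 0}$.

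Next I would introduce the change of variable $V = W^l - \frac{\mu}{\lambda+\mu}W_t^l$, so that the objective in~(\ref{eq:conv_prog}) becomes the regularizer $\frac{\lambda+\mu}{2}\|V\|_F^2$ plus the per-sample terms evaluated at $W^l = V + \frac{\mu}{\lambda+\mu}W_t^l$. Because $\frac{\mu}{\lambda+\mu}W_t^l$ is a fixed vector, its contribution inside each loss-augmented maximum and in the subtracted ground-truth term is constant in $V$ and affine in the maximizing argument; it can therefore be folded into an effective loss $\tilde\Delta_i(\ybar,\overline{\bf h})$ without altering the joint feature vectors $\Psi$. The transformed problem is then exactly a structured SVM in $V$ with regularization constant $\lambda+\mu$, to which unmodified BCFW applies. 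For such a problem the standard initialization concentrates all dual mass on the ground-truth labelling of each sample, which is feasible and yields the primal point $V = {\bf 0}$, i.e. $W^l = \frac{\mu}{\lambda+\mu}W_t^l$. This establishes the first claim.

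For the second claim I would observe that the conditional-gradient oracle~(\ref{eq:cond_grad}) maximizes over $(\ybar,\overline{\bf h})$, and the proximal term $\frac{\mu}{2}\|W^l-W_t^l\|_F^2$ is constant with respect to these variables; it therefore leaves the oracle unchanged apart from the absorbed affine offset, which merely shifts $\Delta$ by a linear function of $\Psi$. Since this offset has the same functional form as the original loss-augmentation term, the oracle remains a loss-augmented inference of the type covered by Proposition~\ref{prop:gradients}, and is thus still solved by a forward pass with the feature vectors recovered by a backward pass. The hard part will be the bookkeeping of the affine shift: one must verify that folding $\frac{\mu}{\lambda+\mu}W_t^l$ into the effective loss preserves both the simplex structure of the dual, so that the ground-truth initialization stays feasible, and the forward-backward computability of the oracle. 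Once these two points are checked, the remaining steps are routine algebra.
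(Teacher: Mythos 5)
Your proposal is correct and follows essentially the same route as the paper's Appendix~\ref{sec:dual}: both arguments hinge on completing the square to rewrite $\frac{\lambda}{2}\|W^l\|_F^2 + \frac{\mu}{2}\|W^l - W_t^l\|_F^2$ as $\frac{\lambda+\mu}{2}\big\|W^l - \frac{\mu}{\lambda+\mu}W_t^l\big\|_F^2$ plus a constant, after which the ground-truth dual initialization gives the primal point $\frac{\mu}{\lambda+\mu}W_t^l$ and the conditional-gradient oracle is unchanged because the offset recombines with the shifted variable to reproduce loss-augmented inference at the current $W^l$. The only cosmetic difference is that the paper derives the Lagrangian dual explicitly and reads the map $w=\rho w_0 - A\alpha$ off the KKT conditions, whereas you reduce to a standard structured SVM by a primal change of variables and invoke the standard BCFW initialization.
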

In practice, we always
choose a value of $\mu = 10 \lambda$: this yields an initialization of $\simeq 0.9 W_t^l$ which does not significantly change the value of the objective function.

\paragraph{Efficient Representation of Joint Feature Vectors.} The BCFW algorithm requires us to store a linear combination of the feature vectors for each mini-batch.
While this requirement is not too stringent for
convolutional and multi-class SVM layers, where the dimensionality of the feature vectors is small,
it becomes prohibitively expensive for dense layers. The following proposition prevents a blow-up in the memory requirements of BCFW.
\begin{proposition}
\label{prop:feat_representation}
When optimizing dense layer $l$, if $W^l \in \mathbb{R}^{p \times q}$, we can store a representation of the joint feature vectors $\Psi({\bf x}, y, {\bf h})$
with vectors of size $p$ in problems~(\ref{eq:completion_step}) and~(\ref{eq:conv_prog}). This is in contrast to the
na\"ive approach that requires them to be of size $p\times q$.
\end{proposition}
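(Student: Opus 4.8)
The plan is to exploit the fact that, while we optimize a dense layer $l$, the input to that layer is a fixed vector shared by all candidate outputs and latent configurations, so that every feature vector factorizes as a rank-one outer product. Recall from the proof of Proposition~\ref{prop:layerwise_prob} that the input $\mathbf{z}^{l-1} = \varphi(\mathbf{x})$ to layer $l$ is constant in $W^l$; for a dense layer the pre-activation is the matrix-vector product $\mathbf{a} = W^l \mathbf{z}^{l-1} \in \mathbb{R}^p$ with $\mathbf{z}^{l-1} \in \mathbb{R}^q$. By Proposition~\ref{prop:gradients} the feature vector $\Psi(\mathbf{x},y,\mathbf{h})$ is the subgradient of the objective with respect to $W^l$ obtained by a backward pass; since $W^l$ enters the network only through $\mathbf{a}$, the chain rule gives $\partial/\partial W^l_{kj} = g_k z^{l-1}_j$. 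Reshaping $\Psi$ into a $p \times q$ matrix, this reads $\Psi(\mathbf{x},y,\mathbf{h}) = \mathbf{g}(y,\mathbf{h}) (\mathbf{z}^{l-1})^\top$, where $\mathbf{g}(y,\mathbf{h}) \in \mathbb{R}^p$ is the backpropagated error signal at the pre-activation. The latent variables index the activations of the layers following $l$, hence they influence $\mathbf{g}$ but leave the right factor $\mathbf{z}^{l-1}$ untouched.

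Next I would show that this rank-one structure survives everything the BCFW algorithm does. The only feature quantities entering problems~(\ref{eq:completion_step}) and~(\ref{eq:conv_prog}) are the differences $\Psi(\mathbf{x}_i,\bar{y},\bar{\mathbf{h}}) - \Psi(\mathbf{x}_i,y_i,\mathbf{h}_i^*)$; since both terms share the same $\mathbf{z}^{l-1}_i$, each difference equals $(\mathbf{g}_{\bar{y},\bar{\mathbf{h}}} - \mathbf{g}_{y_i,\mathbf{h}_i^*})(\mathbf{z}^{l-1}_i)^\top$, again rank one with the same right factor. BCFW maintains, per block $i$, a primal contribution $w_i$ that is a convex combination of such feature differences; by linearity $w_i = \tilde{\mathbf{g}}_i (\mathbf{z}^{l-1}_i)^\top$ for some $\tilde{\mathbf{g}}_i \in \mathbb{R}^p$. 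Therefore each block only needs to store the size-$p$ vector $\tilde{\mathbf{g}}_i$, with the fixed right factor $\mathbf{z}^{l-1}_i$ precomputed once, instead of a dense $p \times q$ matrix. This is precisely where the order-of-magnitude memory reduction arises, since per-block storage dominates when the number of blocks is large.

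Finally I would verify that every quantity BCFW actually computes can be evaluated in the factored form without ever materializing a $p \times q$ matrix. Inner products reduce to $\langle \mathbf{g}_a(\mathbf{z}^{l-1})^\top, \mathbf{g}_b(\mathbf{z}^{l-1})^\top \rangle = (\mathbf{g}_a^\top \mathbf{g}_b)\,\|\mathbf{z}^{l-1}\|^2$, the scores $(W^l)^\top \Psi$ follow from a forward pass, and the analytically optimal step size of the line search is a ratio of such inner products; all of these are computable from the size-$p$ factors together with the precomputed $\mathbf{z}^{l-1}_i$. I expect the main obstacle to be exactly this bookkeeping: one must confirm that no BCFW subroutine, in particular the step-size computation and the duality-gap estimate, requires an operation that breaks the outer-product form, and one must carefully separate the shared factors $\mathbf{z}^{l-1}_i$ from the per-candidate factors $\mathbf{g}$ across both~(\ref{eq:completion_step}) and~(\ref{eq:conv_prog}), so that the $p \times q$ blow-up genuinely never occurs.
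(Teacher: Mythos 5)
Your proposal is correct and follows essentially the same route as the paper: the chain rule gives $\frac{\partial \eta}{\partial W^l} = \frac{\partial \eta}{\partial z^l}\,(z^{l-1})^\top$ with $z^{l-1}\in\mathbb{R}^q$ fixed (computable by a forward pass), so only the left factor in $\mathbb{R}^p$ need be stored. Your additional verification that convex combinations within a BCFW block preserve the rank-one form (since they share the same right factor $z_i^{l-1}$) and that inner products and step sizes can be evaluated in factored form is a useful elaboration of bookkeeping the paper leaves implicit, but it is not a different argument.
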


\begin{proof}
By Proposition (\ref{prop:gradients}), the feature vectors are subgradients of the hinge loss function, which we loosely denote by $\eta$ for this proof. Then by the chain rule: $\frac{\partial \eta}{\partial W^l} = \frac{\partial \eta}{\partial z^l} \frac{\partial z^l}{\partial W^l} = \frac{\partial \eta}{\partial z^l} \cdot \left(z^{l-1} \right)^T$. Noting that $z^{l-1} \in \mathbb{R}^q$ is a forward pass up until layer $l$ (independent of $W^l$), we can store only $\frac{\partial \eta}{\partial z^l} \in \mathbb{R}^p$ and still reconstruct the full feature vector $\frac{\partial \eta}{\partial W^l}$ by a forward pass and an outer product.
\end{proof}

\paragraph{Reducing the Number of Constraints.}
In order to reduce the amount of time required for the BCFW algorithm to converge, we use the structure of $\mathcal{H}$ to simplify problem~(\ref{eq:conv_prog}) to a much simpler problem. Specifically, since $\mathcal{H}$ represents the activations of the network for a given sample, it has a natural decomposition over the layers: $\mathcal{H} = \mathcal{H}_1 \times ... \times \mathcal{H}_L$. We use this structure in the following observation.
\begin{observation}
Problem~(\ref{eq:conv_prog}) can be approximately solved by optimizing the dual problem on increasingly large search spaces. In other words, we start with constraints  of $\mathcal{Y}$, followed by ${Y} \times \mathcal{H}_L$, then $\mathcal{Y} \times \mathcal{H}_L \times \mathcal{H}_{L-1}$ and so on. The algorithm converges when the primal-dual gap is below tolerance.
\end{observation}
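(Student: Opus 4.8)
The plan is to recognize the proposed scheme as a working-set (cutting-plane) instantiation of BCFW on the dual of~(\ref{eq:conv_prog}), and to argue correctness through the primal-dual gap certificate rather than through any particular ordering of the enlargements. First I would write down the dual of the structured SVM~(\ref{eq:conv_prog}): it is a smooth concave quadratic in blocks of variables $\alpha_i$, one block per training sample, where each $\alpha_i$ ranges over the simplex on the finite joint space $\mathcal{Y} \times \mathcal{H}$, and the primal iterate $W^l$ is recovered from $\alpha$ by the usual linear KKT map. Restricting the optimization to a subset $\mathcal{S} \subseteq \mathcal{Y} \times \mathcal{H}$ simply forces the dual mass to vanish outside $\mathcal{S}$; padding such a solution with zeros yields a point that is feasible for the full dual. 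Hence the value attained on $\mathcal{S}$ lower-bounds the full dual optimum, and the $W^l$ it induces is a valid primal point whose true objective (the max taken over all of $\mathcal{Y} \times \mathcal{H}$) upper-bounds the optimum.

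The second step is monotonicity. I would observe that the proposed search spaces are nested, $\mathcal{Y} \subseteq \mathcal{Y} \times \mathcal{H}_L \subseteq \mathcal{Y} \times \mathcal{H}_L \times \mathcal{H}_{L-1} \subseteq \cdots \subseteq \mathcal{Y} \times \mathcal{H}$, once each smaller space is embedded in the larger by fixing the not-yet-active latent coordinates to a reference activation (for instance the value imputed in step~3 of Algorithm~\ref{algo:cccp}). Because the dual domain grows with the search space, the sequence of restricted dual optima is non-decreasing and bounded above by the full optimum; by strong duality for this convex quadratic program that optimum equals the primal optimum of~(\ref{eq:conv_prog}). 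Thus each enlargement can only tighten both the lower and the upper bound.

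The third step is the optimality certificate. At any stage I would compute the Frank-Wolfe (Lagrangian) duality gap of the current iterate. The gap is evaluated through the linear minimization oracle, which is exactly the conditional-gradient problem~(\ref{eq:cond_grad}) maximized over the \emph{entire} space $\mathcal{Y} \times \mathcal{H}$; by Proposition~\ref{prop:gradients} this oracle is tractable via a single forward-backward pass despite the exponential size of $\mathcal{H}$. A vanishing gap certifies that the current solution is optimal for the full problem, so a gap below the prescribed tolerance certifies approximate optimality regardless of how much of $\mathcal{H}$ has been explored; a non-vanishing gap returns the most-violated $(\hat{y}_i,\hat{\bf h}_i)$, which identifies the latent coordinates that must still be activated. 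Since $\mathcal{Y} \times \mathcal{H}$ is finite, the enlargement either halts on the gap test or eventually exhausts the space, so the procedure terminates with an approximately optimal solution.

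The main obstacle is conceptual rather than computational: the scheme is correct only because the gap is assessed against the full constraint set and not against the current working set, so I must ensure the oracle in~(\ref{eq:cond_grad}) ranges over all of $\mathcal{Y} \times \mathcal{H}$ and invoke Proposition~\ref{prop:gradients} to keep that step efficient. The remaining content of the statement — that in practice only the first few factors $\mathcal{H}_L, \mathcal{H}_{L-1}, \dots$ need be included before the gap falls below tolerance — is genuinely empirical, and is precisely what justifies calling this an observation rather than a worst-case theorem; I would not attempt to prove a bound on the number of factors required.
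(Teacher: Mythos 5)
Your proposal is correct, but it is worth noting that the paper offers no proof of this statement at all: it is labelled an Observation precisely because its content is empirical, and the text following it supplies only an intuition (the norm of the difference-of-convex decomposition grows with the number of activations chosen differently in the convex and concave parts, which discourages paths that deviate from the ground-truth activations and so makes the restriction to $\mathcal{Y}$ nearly tight for convolutional layers). What you do differently is to formalize the part of the claim that \emph{can} be proved: you recast the scheme as a working-set method on the dual, use the nestedness of $\mathcal{Y} \subseteq \mathcal{Y}\times\mathcal{H}_L \subseteq \cdots$ (with inactive latent coordinates pinned to the ground-truth imputation, exactly as the paper prescribes) to get monotone lower and upper bounds, and make the crucial point that the primal-dual gap certificate is only valid if the linear minimization oracle in~(\ref{eq:cond_grad}) ranges over the \emph{full} space $\mathcal{Y}\times\mathcal{H}$, which Proposition~\ref{prop:gradients} keeps tractable. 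This buys a rigorous correctness-and-termination guarantee that the paper does not state; what it does not buy, and what you rightly decline to claim, is the paper's actual selling point, namely that the gap already falls below tolerance after the first one or two enlargements --- that part remains empirical, and the paper's DC-norm intuition is its only offered explanation.
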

The latent variables which are not optimized over are set to be the same as the ones selected for the ground truth. Experimentally, we observe that for convolutional layers (architectures in section \ref{sec:exp}), restricting the search space to $\mathcal{Y}$ yields a dual gap low enough to consider the problem has converged.  This means that in practice for these layers, problem~(\ref{eq:conv_prog}) can be solved by searching directions over the search space $\mathcal{Y}$ instead of the much larger $\mathcal{Y} \times \mathcal{H}$. The intuition is that the norm of the difference-of-convex decomposition grows with the number of activations selected differently in the convex and concave parts (see Appendix \ref{sec:proof_pl} for the decomposition of piecewise linear functions). This compels the path of activations to be the same in the convex and the concave part to avoid large margin violations, especially for convolutional layers which are followed by numerous non-linearities at the max-pooling layers.

\section{Experiments}
\label{sec:exp}

Our experiments are designed to assess the ability of LW-SVM (Layer-Wise SVM, our method) and the SGD baselines to optimize problem~(\ref{eq:obj_fun}). To compare LW-SVM with the state-of-the-art variants of backpropagation, we look at the training and testing accuracies as well as the training objective value. Unlike dropout, which effectively learns an ensemble model, we learn a single model using each baseline optimization algorithm. All experiments are conducted on a GPU (Nvidia Titan X) and use Theano \citep{bergstra+al:2010-scipy, Bastien-Theano-2012}. We compare LW-SVM with Adagrad, Adadelta and Adam. For all data sets, we start at a good solution provided by these solvers and fine-tune it with LW-SVM. We then check whether a longer run of the SGD solver reaches the same level of performance.

The practical use of the LW-SVM algorithm needs choices at the three following levels: how to select the layer to optimize (i), when to stop the CCCP on each layer (ii) and when to stop the convex optimization at each inner iteration of the CCCP (iii). These choices are detailed in the next paragraph.

The layer-wise schedule of LW-SVM is as follows: as long as the validation accuracy increases, we perform passes from the end of the network (SVM) to the first layer (i). At each pass, each layer is optimized with one outer iteration of the CCCP (ii). The inner iterations are stopped when the dual objective function does not increase by more than 1\% over an epoch (iii). We point out that the dual objective function is cheap to compute since we are maintaining its value at all time. By contrast, to compute the exact primal objective function requires a forward pass over the data set without any update.

\subsection{MNIST Data Set}

\paragraph{Data set \& Architecture} The training data set consists in 60,000 gray scale images of size $28 \times 28$ with 10 classes, which we split into 50,000 samples for training and 10,000 for validating. The images are normalized, and we do not use any data augmentation. The architecture used for this experiment is shown in Figure \ref{fig:architecture_mnist}.

\begin{figure}[h]
\centering
\includegraphics[scale=0.65]{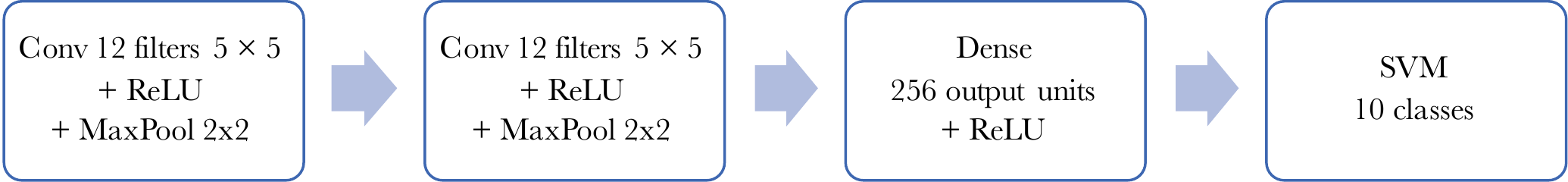}
\caption{\em Network architecture for the MNIST data set.}
\label{fig:architecture_mnist}
\end{figure}

\paragraph{Method} The number of epochs is set to 200, 100 and 100 for Adagrad, Adadelta and Adam -  Adagrad is given more epochs as we observed it took a longer time to converge. We then use LW-SVM and compare the results on training objective, training accuracy and testing accuracy. We also let the solvers run to up to 500 epochs to verify that we have not stopped the optimization prematurely. The regularization hyperparameter $\lambda$ and the initial learning rate are chosen by cross-validation. $\lambda$ is set to 0.001 for all solvers, and the initial learning rates can be found in Appendix \ref{sec:exp_details}. For LW-SVM, $\lambda$ is set to the same value as the baseline, and the proximal term $\mu$ to $\mu = 10 \lambda = 0.01$.

\begin{table}[h]
\caption{\em Results on MNIST: we compare the performance of LW-SVM with SGD algorithms on three metrics: training objective, training accuracy and testing accuracy. LW-SVM outperforms Adadelta and Adam on all three metrics, with marginal improvements since those find already very good solutions.}
\label{res:mnist}
\begin{center}
\begin{tabular}{lcccr}
{\bf Solver (epochs)}  &{\bf Training} &{\bf Training } &{\bf Time (s)} &{\bf Testing} \\
&{\bf Objective} &{\bf Accuracy} & &{\bf Accuracy}
\\ \hline
Adagrad (200) &0.027 &99.94\% &707 &99.22\% \\
Adagrad (500) &0.024 &99.96\% &1759 &99.20\% \\
Adagrad (200) + LW-SVM &0.025 &99.94\% &707+366 &99.21\% \\ \hline
Adadelta (100) &0.049 &99.56\% &124 &98.96\% \\
Adadelta (500) &0.048 &99.48\% &619 &99.05\% \\
Adadelta (100) + LW-SVM &0.033 &99.85\% &124+183 &{\bf 99.24\%} \\ \hline
Adam (100) &0.038 &99.76\% &333 &99.19\% \\
Adam (500) &0.038 &99.72\% &1661 &99.23\% \\
Adam (100) + LW-SVM &0.029 &99.89\% &333+353 &99.23\% \\ \hline
\end{tabular}
\end{center}
\end{table}

\begin{figure}[h]
\centering
\includegraphics[scale=0.4]{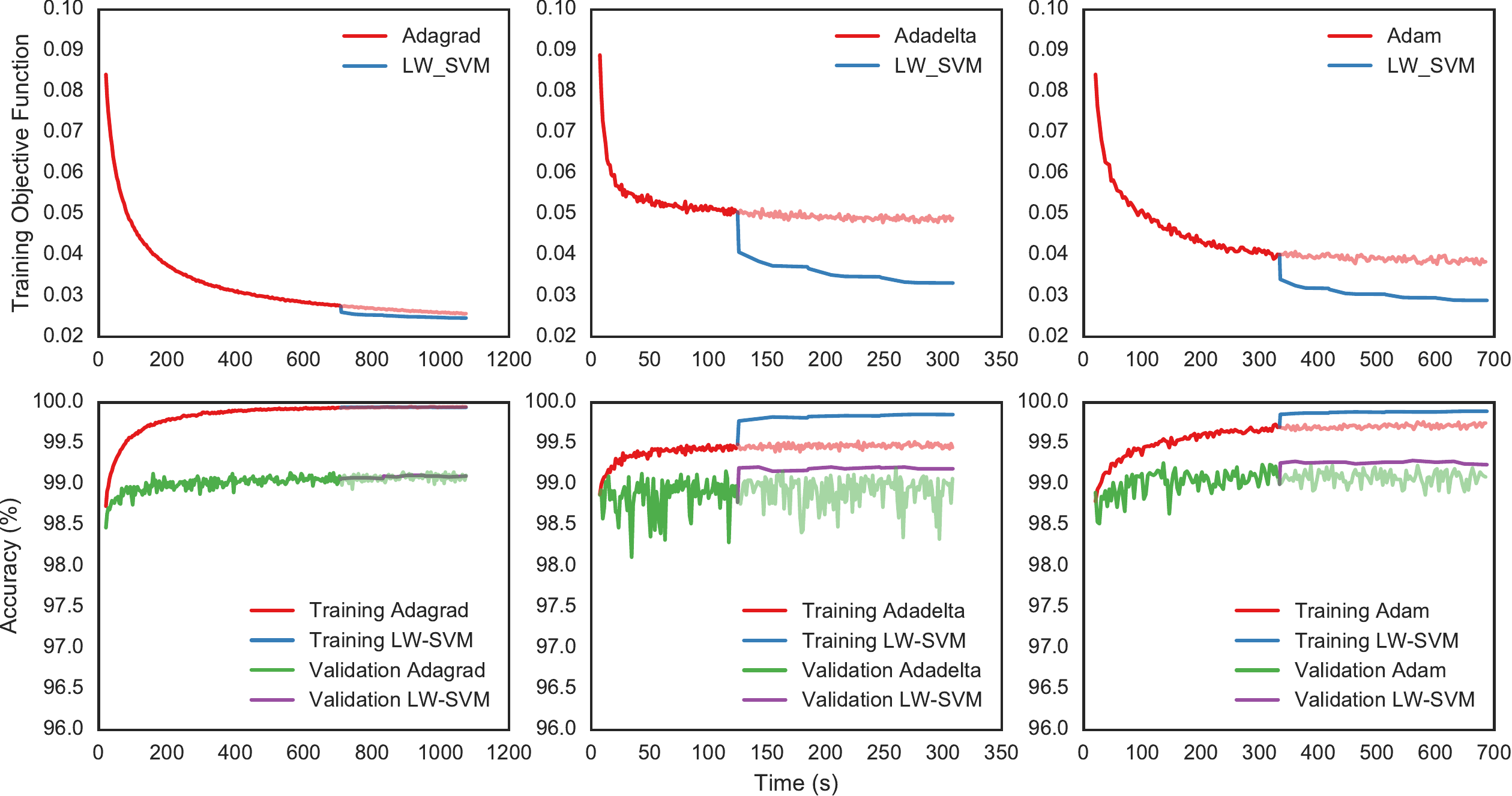}
\caption{\em Results on MNIST of Adagrad, Adadelta and Adam followed by LW-SVM. We verify that switching to LW-SVM leads to better solutions than running SGD longer (shaded continued plots).}
\label{fig:mnist}
\end{figure}

\paragraph{Results} As Table \ref{res:mnist} shows, LW-SVM systematically improves on all training objective, training accuracy and testing accuracy. In particular, it obtains the best testing accuracy when combined with Adadelta. Because each convex sub-problem is run up to sufficient convergence, the objective function of LW-SVM features of monotonic decrease at each iteration of the CCCP (blue curves in first row of Figure~\ref{fig:mnist}).

\subsection{CIFAR Data Sets}

\paragraph{Data sets \& Architectures} The CIFAR-10/100 data sets are comprised of 60,000 RGB natural images of size $32 \times 32$ with 10/100 classes \citep{Krizhevsky09learningmultiple}). We split the training set into 45,000 training samples and 5,000 validation samples in both cases. The images are centered and normalized, and we do not use any data augmentation. To obtain a strong enough baseline, we employ (i) a pre-training with a softmax and cross-entropy loss and (ii) Batch-Normalization (BN) layers before each non-linearity.

We have experimentally found out that pre-training with a softmax layer followed by a cross-entropy loss led to better behavior and results than using an SVM loss alone. The baselines are trained with batch normalization. Once they have converged, the estimated mean and standard deviation are fixed like they would be at test time. Then batch normalization becomes a linear transformation, which can be handled by the LW-SVM algorithm. This allows us to compare LW-SVM with a baseline benefiting from batch normalization. Specifically, we use the architecture shown in Figure \ref{fig:architecture_cifar}:

\begin{figure}[H]
\centering
\includegraphics[scale=0.55]{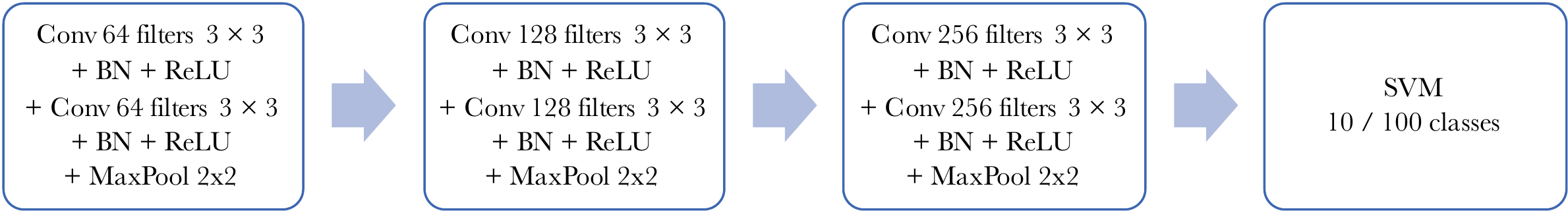}
\caption{\em Network architecture for the CIFAR data sets.}
\label{fig:architecture_cifar}
\end{figure}
\paragraph{Method} Again, the initial learning rates and regularization weight $\lambda$ are obtained by cross-validation, and a value of 0.001 is obtained for $\lambda$ for all solvers on both datasets. As before, $\mu$ is set to $10 \lambda$. The initial learning rates are reported in Appendix \ref{sec:exp_details}. The layer schedule and convergence criteria are as described at the beginning of the section. For each SGD optimizer, we train the network for 10 epochs with a cross-entropy loss (preceded by a softmax layer). Then it is trained with an SVM loss (without softmax) for respectively 1000, 100 and 100 epochs for Adagrad, Adadelta and Adam. This amount is doubled to verify that the baselines are not harmed by a premature stopping. Results are presented in Tables \ref{res:cifar10} and \ref{res:cifar100}.
\begin{table}[H]
\caption{\em Results on CIFAR-10: LW-SVM outperforms Adam and Adadelta on all three metrics. It improves on Adagrad, but does not outperform it - however Adagrad takes a long time to converge and does not obtain the best generalization.}
\label{res:cifar10}
\begin{center}
\begin{tabular}{lcccr}
{\bf Solver (epochs)}  &{\bf Training} &{\bf Training } &{\bf Time (h)} &{\bf Testing} \\
&{\bf Objective} &{\bf Accuracy} & &{\bf Accuracy}
\\ \hline
Adagrad (1000) &0.059 &98.42\% &10.58 &83.15\% \\
Adagrad (2000) &0.009 &100.00\% &21.14 &83.84\% \\
Adagrad (1000) + LW-SVM &0.012 &100.00\% &10.58+1.66 &83.43\% \\ \hline
Adadelta (100) &0.113 &97.96\% &0.83 &84.42\% \\
Adadelta (200) &0.054 &99.83\% &1.66 &85.02\% \\
Adadelta (100) + LW-SVM &0.038 &100.00\% &0.83+0.68 &{\bf 86.62\%} \\ \hline
Adam (100) &0.113 &98.27\% &0.83 &84.18\% \\
Adam (200) &0.055 &99.76\% &1.65 &82.55\% \\
Adam (100) + LW-SVM &0.034 &100.00\% &0.83+1.07 &85.52\% \\ \hline
\end{tabular}
\end{center}
\end{table}

\begin{table}[H]
\caption{\em Results on CIFAR-100: LW-SVM improves on all other solvers and obtains the best testing accuracy.}
\label{res:cifar100}
\begin{center}
\begin{tabular}{lcccr}
{\bf Solver (epochs)}  &{\bf Training} &{\bf Training } &{\bf Time (h)} &{\bf Testing} \\
&{\bf Objective} &{\bf Accuracy} & &{\bf Accuracy}
\\ \hline
Adagrad (1000) &0.201 &95.36\% &10.68 &54.00\% \\
Adagrad (2000) &0.044 &99.98\% &21.20 &54.55\% \\
Adagrad (1000) + LW-SVM &0.062 &99.98\% &10.68+3.40 &53.97\% \\ \hline
Adadelta (100) &0.204 &95.68\% &0.84 &58.71\% \\
Adadelta (200) &0.088 &99.90\% &1.67 &58.03\% \\
Adadelta (100) + LW-SVM &0.052 &99.98\% &0.84+1.48 &{\bf 61.20\%} \\ \hline
Adam (100) &0.221 &95.79\% &0.84 &58.32\% \\
Adam (200) &0.088 &99.87\% &1.66 &57.81\% \\
Adam (100) + LW-SVM &0.059 &99.98\% &0.84+1.69 &60.17\% \\ \hline
\end{tabular}
\end{center}
\end{table}

\begin{figure}[h]
\centering
\includegraphics[scale=0.4]{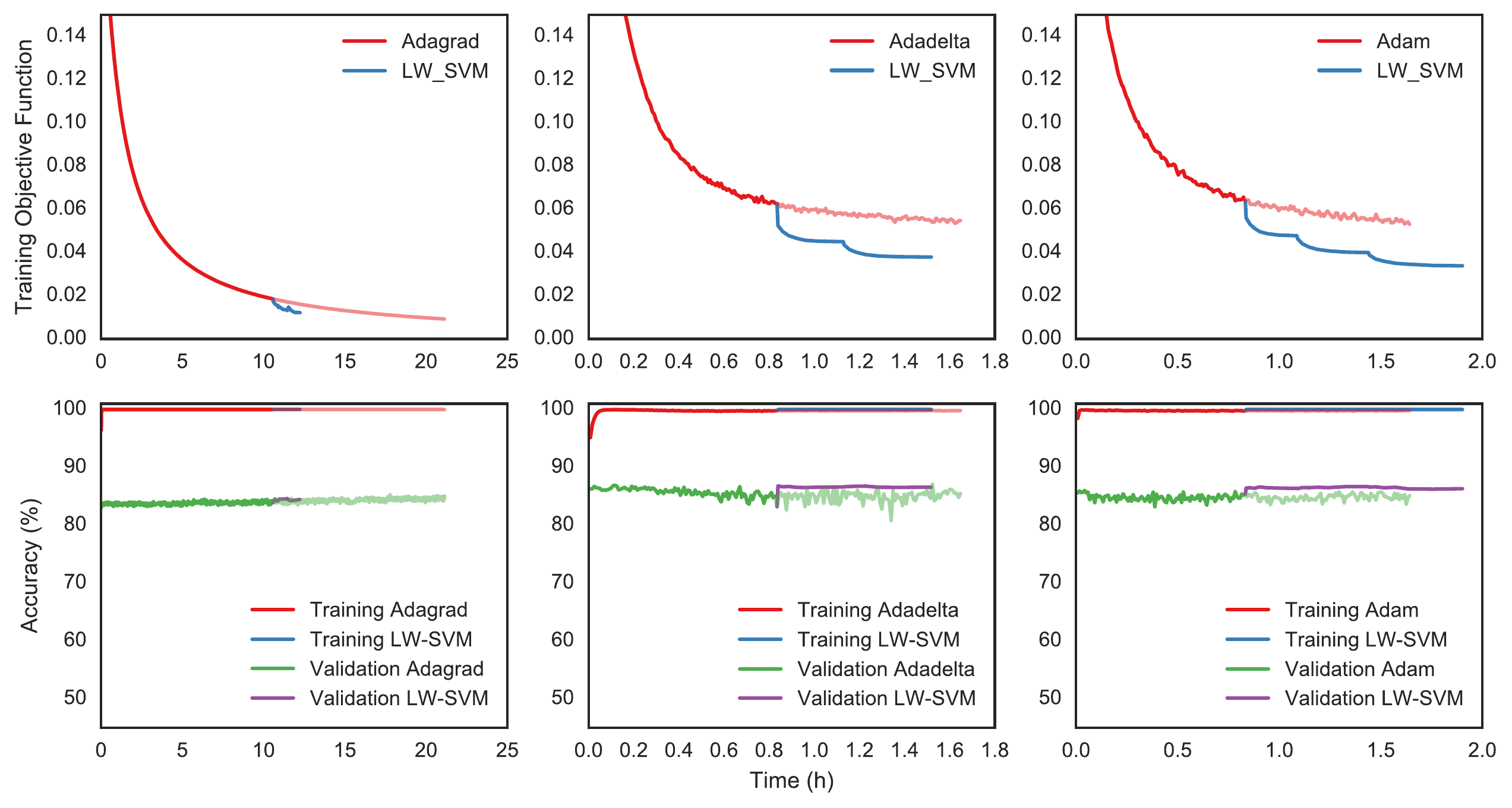}
\caption{\em Results on CIFAR-10 of Adagrad, Adadelta and Adam followed by LW-SVM. The successive drops of the training objective function with LW-SVM correspond to the passes over the layers.}
\label{fig:cifar10}
\end{figure}

\begin{figure}[h]
\centering
\includegraphics[scale=0.4]{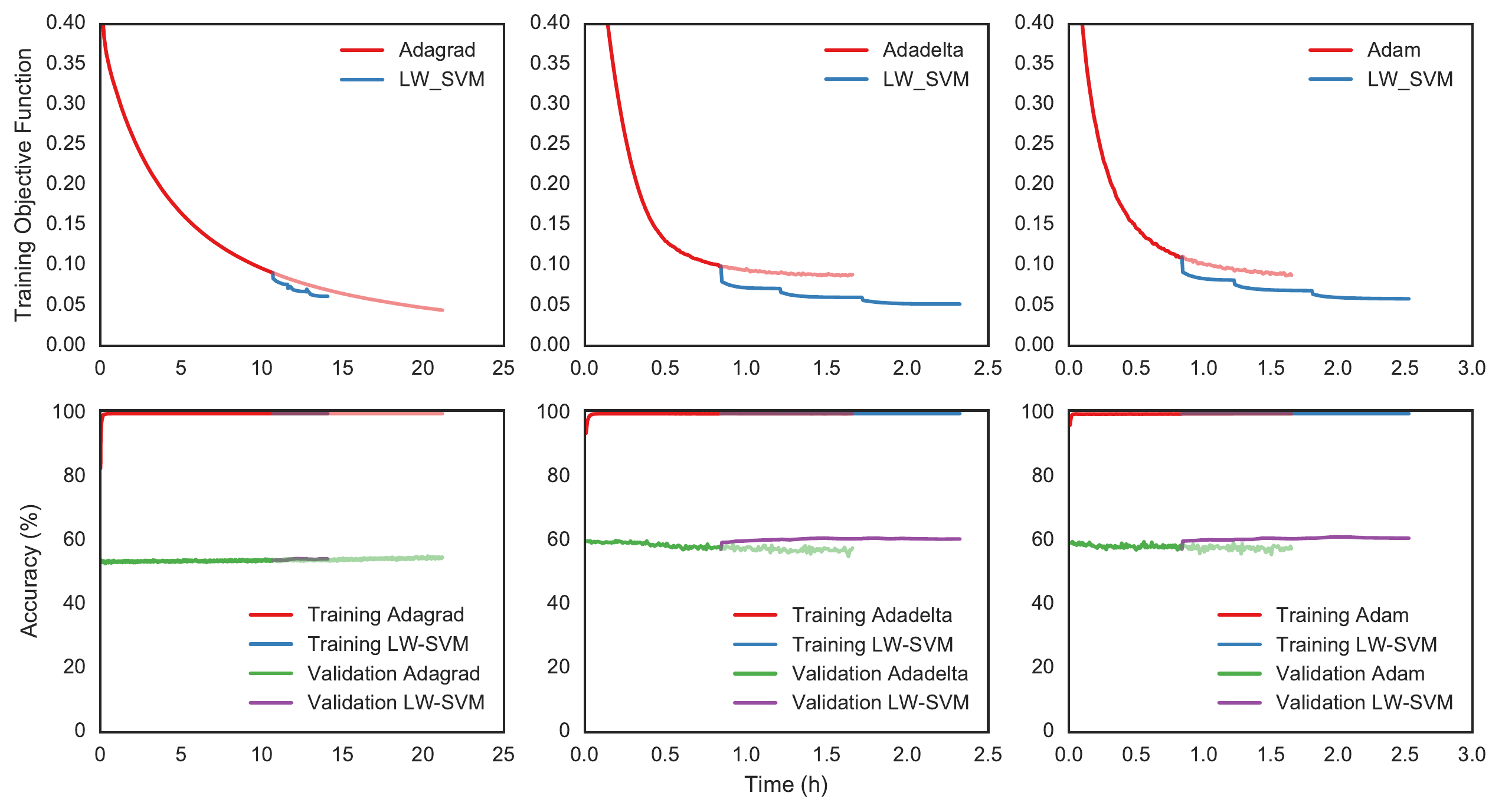}
\caption{\em Results on CIFAR-100 of Adagrad, Adadelta and Adam followed by LW-SVM. Although Adagrad keeps improving the training objective function, it takes much longer to converge and the improvement on the training and testing accuracies rapidly become marginal.}
\label{fig:cifar10}
\end{figure}

\paragraph{Results} It can be seen from this set of results that LW-SVM \emph{always} improves over the solution of the SGD algorithm, for example on CIFAR-100, decreasing the objective value of Adam from 0.22 to 0.06, or improving the test accuracy of Adadelta from 84.4\% to 86.6\% on CIFAR-10. The automatic step-size allows for a precise fine-tuning to optimize the training objective, while the regularization of the proximal term helps for better generalization.

\subsection{ImageNet Data Set}

We show results on the classification task of the ImageNet data set \citep{ILSVRC15}. The ImageNet data set contains 1.2 million images for training and 50,000 images for validation, each of them mapped to one of the 1,000 classes. For this experiment we use a VGG-16 network (configuration D in \citep{DBLP:journals/corr/SimonyanZ14a}). We start with a pre-trained model as publicly available online, and we tune each of the dense layers as well as the final SVM layer with the LW-SVM algorithm. This experiment is designed to test the scalability of LW-SVM to large data sets and large networks, rather than comparing with the optimization baselines as before - indeed for any baseline, obtaining proper convergence as in previous experiments would take a very long time.  We set the hyperparameters $\lambda$ to 0.001 and $\mu$ to $10 \lambda$ as previously. We budget five epochs per layer, which in total takes two days of training on a single GPU (Nvidia Titan X). At training time we used centered crops of size $224 \times 224$. The evaluation method is the same as the single test scale method described in \citep{DBLP:journals/corr/SimonyanZ14a}. We report the results on the validation set in Table \ref{res:imagenet}, for the Pre-Trained model (PT) and the same model further optimized by LW-SVM (PT+LW-SVM):
\vspace{-15pt}
\begin{table}[H]
\caption{\em Results on the 1,000-way classification challenge of ImageNet on the validation set, for the Pre-Trained model (PT) and the same model further optimized by LW-SVM (PT+LW-SVM).}
\label{res:imagenet}
\begin{center}
\begin{tabular}{lcr}
{\bf Network}  &{\bf Top-1 Accuracy} &{\bf Top-5 Accuracy}
\\ \hline
VGG-16 (PT) &73.30\% &91.33\% \\
VGG-16 (PT + LW-SVM) &73.81\% &91.61\% \\
\end{tabular}
\end{center}
\end{table}

Since the objective function penalizes the top-1 error, it is logical to observe that the improvement is most important on the top-1 accuracy. Importantly, having an efficient representation of feature vectors proves to be essential for such large networks: for instance, in the optimization of the first fully connected layer with a batch-size of 100, the use of our representation lowers the memory requirements of the BCFW algorithm from 7,600GB to 20GB, which can then fit in the memory of a powerful computer.

\section{Discussion}

We presented a novel layerwise optimization algorithm for a large and useful class of convolutional neural networks, which we term PL-CNNs. Our key observation is that
the optimization of the parameters of one layer of a PL-CNN is equivalent to solving a latent structured SVM problem. As the problem is a DC program, it naturally lends
itself to the iterative CCCP approach, which optimizes a convex structured SVM objective at each iteration. This allows us to leverage the advancements made in
structured SVM optimization
over the past decade to design a computationally feasible approach for learning PL-CNNs. Specifically, we use the BCFW algorithm and extend it to improve its
initialization, memory requirements and time complexity. In particular, this allows our method to not require the tuning of any learning rate. Using the publicly available MNIST, CIFAR-10 and CIFAR-100 data sets, we show that our approach provides a boost for learning PL-CNNs over the state of the art backpropagation algorithms. Furthermore, we demonstrate scalability of the method with results on the ImageNet data set with a large network.

When the mean and standard deviation estimations of batch normalization are not fixed (unlike in our experiments with LW-SVM), batch normalization is not a piecewise linear transformation, and therefore cannot be used in conjunction with the BCFW algorithm for SVMs. However, it is difference-of-convex as it is a $\mathcal{C}^2$ function \citep{horst1999dc}. Incorporating a normalization scheme into our framework will be the object of future work. With our current methodology, LW-SVM algorithm can already be used on most standard architectures like VGG, Inception and ResNet-type architectures.

It is worth noting that other approaches for solving structured SVM problems, such as cutting-plane
algorithms~\citep{tsochantaridisicml04,joachims2009cutting} and stochastic subgradient descent~\citep{shwartzicml07}, also rely on the efficiency of estimating
the conditional gradient of the dual.
Hence, all these methods are equally applicable to our setting. Indeed, the main strength of our approach is the establishment of a hitherto unknown
connection between CNNs and latent structured SVMs. We believe that our observation will allow researchers to transfer the substantial existing knowledge of DC programs in
general, and latent SVMs specifically, to produce the next generation of principled optimization algorithms for deep learning. In fact, there are already several such
improvements that can be readily applied in our setting, which were not explored only due to a lack of time. This includes
multi-plane variants of BCFW~\citep{shah2015multi,osokin2016gapBCFW}, as well as generalizations of
Frank-Wolfe such as partial linearization~\citep{mohapatra2016partial}.

\subsubsection*{Acknowledgments}

This work was supported by the EPSRC AIMS CDT grant EP/L015987/1, the EPSRC Programme Grant Seebibyte EP/M013774/1 and Yougov. Many thanks to A. Desmaison, R. Bunel and D. Bouchacourt for the helpful discussions.

\vfill
\pagebreak

\bibliographystyle{iclr2017_conference}
\bibliography{./include/biblio}

\vfill
\pagebreak

\appendix

\section{Piecewise Linear Functions}
\label{sec:proof_pl}

\paragraph{Proof of Proposition (\ref{prop:pl_composition})} By the definition from \citep{Melzer1986}, we can write each function as the difference of two point-wise maxima of linear functions:
\begin{align*}
g(v) = \max_{j \in [m_+]} \{{\bf a}_i^\top{\bf v}\} - \max_{k \in [m_-]} \{{\bf b}_j^\top{\bf v}\} \\
\text{And } \forall i \in [n], g_i(u) = g_i^+(u) - g_i^-(u)
\end{align*}
Where all the $g_i^+, g_i^-$ are linear point-wise maxima of linear functions. Then:
\begin{align*}
f(u)
    &= g([g_1({\bf u}), \cdots, g_n({\bf u})]^\top) \\
    &= \max_{j \in [m_+]} \{{\bf a}_j^\top [g_1({\bf u}), \cdots, g_n({\bf u})]^\top \} - \max_{k \in [m_-]} \{{\bf b}_k^\top [g_1({\bf u}), \cdots, g_n({\bf u})]^\top\} \\
    &= \max_{j \in [m_+]} \left\{ \sum\limits_{i=1}^n a_{j,i} g_i({\bf u}) \right\} - \max_{k \in [m_-]} \left\{\sum\limits_{i=1}^n b_{k,i} g_i({\bf u})\right\} \\
    &= \max_{j \in [m_+]} \left\{ \sum\limits_{i=1}^n a_{j,i} g_i^+({\bf u}) - \sum\limits_{i=1}^n a_{j,i} g_i^-({\bf u}) \right\} - \max_{k \in [m_-]} \left\{\sum\limits_{i=1}^n b_{k,i} g_i^+({\bf u}) - \sum\limits_{i=1}^n b_{k,i} g_i^-({\bf u})\right\} \\
    &= \max_{j \in [m_+]} \left\{ \sum\limits_{i=1}^n a_{j,i} g_i^+({\bf u}) + \sum_{j' \in [m_+] \backslash\{j\}} \sum\limits_{i=1}^n a_{j,i} g_i^-({\bf u}) \right\} - \sum_{j' \in [m_+]} \sum\limits_{i=1}^n a_{j,i} g_i^-({\bf u}) \\
    &\quad - \max_{k \in [m_-]} \left\{\sum\limits_{i=1}^n b_{k,i} g_i^+({\bf u}) + \sum_{k' \in [m_-]\backslash\{k\}} \sum\limits_{i=1}^n b_{k,i} g_i^-({\bf u})\right\} + \sum_{k' \in [m_-]} \sum\limits_{i=1}^n b_{k,i} g_i^-({\bf u}) \\
    &= \max_{j \in [m_+]} \left\{ \sum\limits_{i=1}^n a_{j,i} g_i^+({\bf u}) + \sum_{j' \in [m_+] \backslash\{j\}} \sum\limits_{i=1}^n a_{j,i} g_i^-({\bf u}) \right\} + \sum_{k' \in [m_-]} \sum\limits_{i=1}^n b_{k,i} g_i^-({\bf u})\\
    &\quad - \left( \max_{k \in [m_-]} \left\{\sum\limits_{i=1}^n b_{k,i} g_i^+({\bf u}) + \sum_{k' \in [m_-]\backslash\{k\}} \sum\limits_{i=1}^n b_{k,i} g_i^-({\bf u})\right\} + \sum_{j' \in [m_+]} \sum\limits_{i=1}^n a_{j,i} g_i^-({\bf u}) \right) \\
    &= \max_{j \in [m_+]} \left\{ \sum\limits_{i=1}^n a_{j,i} g_i^+({\bf u}) + \sum_{j' \in [m_+] \backslash\{j\}} \sum\limits_{i=1}^n a_{j,i} g_i^-({\bf u}) + \sum_{k' \in [m_-]} \sum\limits_{i=1}^n b_{k,i} g_i^-({\bf u}) \right\}\\
    &\quad - \max_{k \in [m_-]} \left\{\sum\limits_{i=1}^n b_{k,i} g_i^+({\bf u}) + \sum_{k' \in [m_-]\backslash\{k\}} \sum\limits_{i=1}^n b_{k,i} g_i^-({\bf u}) + \sum_{j' \in [m_+]} \sum\limits_{i=1}^n a_{j,i} g_i^-({\bf u}) \right\}
\end{align*}

In each line of the last equality, we recognize a pointwise maximum of a linear combination of pointwise maxima of linear functions. This constitutes a pointwise maximum of linear functions.

\paragraph{} This derivation also extends equation~(\ref{eq:dc_nonlinear}) to the multi-dimensional case by showing an explicit DC decomposition of the output.

\section{Computing the Feature Vectors}
\label{sec:feat_vectors}

We describe here how to compute the feature vectors in practice. To this end, we show how to construct two (intertwined) neural networks that decompose the objective function into a convex and a concave part. We call these Difference of Convex (DC) networks. Once the DC networks are defined, a standard forward and backward pass in the two networks yields the feature vectors for the convex and concave contribution to the objective function. First, we derive how to perform a DC decomposition in linear and non-linear layers, and then we construct an example of DC networks.

\paragraph{DC Decomposition in a Linear Layer} Let $W$ be the weights of a fixed linear layer. We introduce $W^+ = \frac{1}{2} (|W| + W)$ and $W^- = \frac{1}{2} (|W| - W)$. We can note that $W^+$ and $W^-$ have exclusively non-negative weights, and that $W = W^+ - W^-$. Say we have an input $u$ with the DC decomposition $(\cvx{u}, \ccv{u})$, that is: $u = \cvx{u} - \ccv{u}$, where both $\cvx{u}$ and $\ccv{u}$ are convex. Then we can decompose the output of the layer as:
\begin{equation}
W \cdot u = \underbrace{(W^+ \cdot \cvx{u} + W^- \cdot \ccv{u})}_{\text{convex}} - \underbrace{(W^- \cdot \cvx{u} + W^+ \cdot \ccv{u})}_{\text{convex}}
\end{equation}

\paragraph{DC Decomposition in a Piecewise Linear Activation Layer} For simplicity purposes, we consider that the non-linear layer is a point-wise maximum across $[K]$ scalar inputs, that is, for an input $(u_k)_{k \in [K]} \in \mathbb{R}^K$, the output is $\max_{k \in [K]} u_k$ (the general multi-dimensional case can be found in Appendix \ref{sec:proof_pl}). We suppose that we have a DC decomposition $(\cvx{u_k}, \ccv{u_k})$ for each input $k$. Then we can write the following decomposition for the output of the layer:
\begin{equation}
\label{eq:dc_nonlinear}
\begin{split}
\max\limits_{k \in [K]} u_k
    &= \max\limits_{k \in [K]} \left( \cvx{u_k} - \ccv{u_k} \right) \\
    &= \underbrace{\max\limits_{k \in [K]} \left( \cvx{u_k} + \sum\limits_{i \in [K], i \neq k} \ccv{u_i} \right)}_{\text{convex}} - \underbrace{\sum\limits_{k \in [K]} \ccv{u_k}}_{\text{convex}}
\end{split}
\end{equation}

In particular, for a ReLU, we can write:
\begin{equation}
\max(\cvx{u} - \ccv{u}, 0) = \underbrace{\max(\cvx{u}, \ccv{u})}_{\text{convex}} - \underbrace{\ccv{u}}_{\text{convex}}
\end{equation}
And for a Max-Pooling layer, one can easily verify that equation~(\ref{eq:dc_nonlinear}) is equivalent to:
\begin{equation}
MaxPool(\cvx{u} - \ccv{u}) = \underbrace{MaxPool(\cvx{u} - \ccv{u}) + SumPool(\ccv{u})}_{\text{convex}} - \underbrace{SumPool(\ccv{u})}_{\text{convex}}
\end{equation}

\paragraph{An Example of DC Networks} We use the previous observations to obtain a DC decomposition in any layer. We now take the example of the neural network used for the experiments on the MNIST data set, and we show how to construct the two neural networks when optimizing $W^1$, the weights of the first convolutional layer. First let us recall the architecture without decomposition:

\begin{figure}[H]
\centering
\includegraphics[scale=0.65]{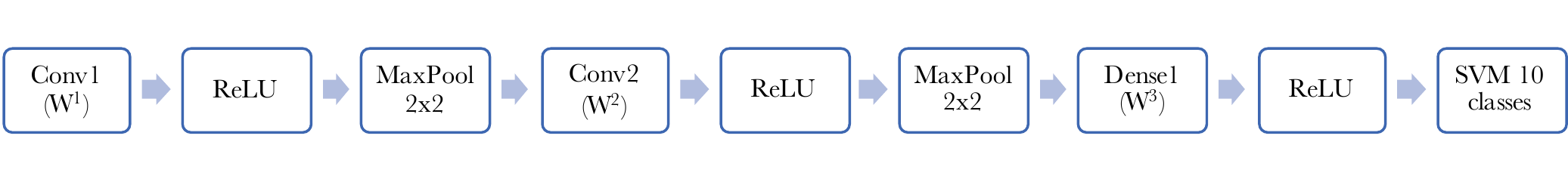}
\caption{\em Detailed network architecture for the MNIST data set.}
\label{fig:architecture_mnist2}
\end{figure}

We want to optimize the first convolutional layer, therefore we fix all other parameters. Then we apply all operations as described in the previous paragraphs, which yields the DC networks in Figure~\ref{fig:dc_networks}.

\begin{figure}[h]
\centering
\includegraphics[scale=0.8]{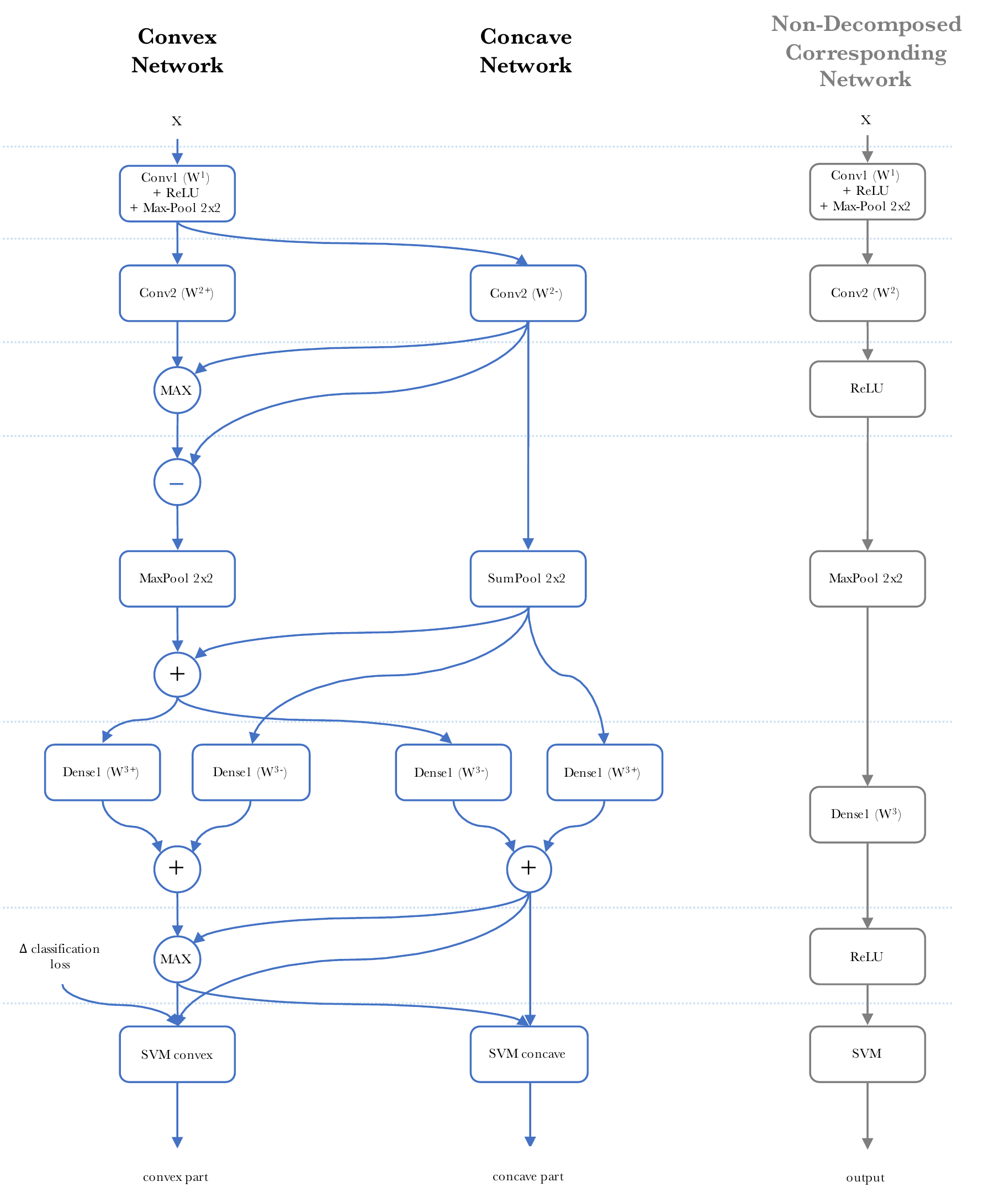}
\caption{\em Difference of Convex Networks for the optimization of Conv1 in the MNIST architecture. The two leftmost columns represent the DC networks. For each layer, the right column indicates the non-decomposed corresponding operation. Note that we represent the DC decomposition of the SVM layer as unique blocks to keep the graph simple. Given the decomposition method for linear and non-linear layers, one can write down the explicit operations without special difficulty.}
\label{fig:dc_networks}
\end{figure}

The network graph in Figure~\ref{fig:dc_networks} illustrates Proposition~\ref{prop:gradients} for the optimization of $W^1$: suppose we are interested in $f^{\text{cvx}}({\bf x}, W^1)$, the convex part of the objective function for a given sample ${\bf x}$, and we wish to obtain the feature vector needed to perform an update of BCFW. With a forward pass, the oracle for the latent and label variables $(\hat{\bf h}, \hat{y})$ is efficiently computed; and with a backward pass, we obtain the corresponding feature vector $\Psi({\bf x}, \hat{y}, \hat{\bf h})$. Indeed, we recall from problem~(\ref{eq:cond_grad}) that $(\hat{\bf h}, \hat{y})$ are the latent and label variables maximizing $f^{\text{cvx}}({\bf x}, W^1)$. Then given ${\bf x}$, the forward pass in the DC networks sequentially solves the nested maximization: it maximizes the activation of the ReLU and MaxPooling units at each layer, thereby selecting the best latent variable $\hat{\bf h}$ at each non-linear layer, and maximizes the output of the SVM layer, thereby selecting the best label $\hat{y}$. At the end of the forward pass, $f^{\text{cvx}}({\bf x}, W^1)$ is therefore available as the output of the convex network, and the feature vector $\Psi({\bf x}, \hat{y}, \hat{\bf h})$ can be computed as a subgradient of $f^{\text{cvx}}({\bf x}, W^1)$ with respect to $W^1$.

Linearizing the concave part is equivalent to fixing the activations of the DC networks, which can be done by using a fixed copy of $W^1$ at the linearization point (all other weights being fixed anyway). Then one can re-use the above reasoning to obtain the feature vectors for the linearized concave part. Altogether, this methodology allows our algorithm to be implemented in any standard deep learning library (our implementation is available at \url{http://github.com/oval-group/pl-cnn}).

\section{Experimental Details}
\label{sec:exp_details}

\paragraph{Hyper-parameters} The hyper-parameters are obtained by cross-validation with a search on powers of 10. In this section, $\eta$ will denote the initial learning rate. We denote the Softmax + Cross-Entropy loss by SCE, while SVM stands for the usual Support Vector Machines loss.

\begin{table}[H]
\centering
\caption{\em Hyper-parameters for the SGD solvers}
\label{table:hyperparams}
\begin{tabular}{l|ccccc}
                          & MNIST           & CIFAR-10        & CIFAR-10        & CIFAR-100       & CIFAR-100 \\
                          &                 &(SCE)            &(SVM)            &(SCE)            &(SVM) \\ \hline
\multirow{2}{*}{Adagrad}  & $\eta=0.01$     & $\eta=0.01$     & $\eta=0.001$    & $\eta=0.01$     & $\eta=0.001$    \\
                          & $\lambda=0.001$ & $\lambda=0.001$ & $\lambda=0.001$ & $\lambda=0.001$ & $\lambda=0.001$ \\ \hline
\multirow{2}{*}{Adadelta} & $\eta=1$        & $\eta=1$        & $\eta=0.1$      & $\eta=1$        & $\eta=0.1$      \\
                          & $\lambda=0.001$ & $\lambda=0.001$ & $\lambda=0.001$ & $\lambda=0.001$ & $\lambda=0.001$ \\ \hline
\multirow{2}{*}{Adam}     & $\eta=0.001$    & $\eta=0.001$    & $\eta=0.0001$   & $\eta=0.001$    & $\eta=0.0001$   \\
                          & $\lambda=0.001$ & $\lambda=0.001$ & $\lambda=0.001$ & $\lambda=0.001$ & $\lambda=0.001$
\end{tabular}
\end{table}

One may note that the hyper-parameters are the same for both CIFAR-10 and CIFAR-100 for each combination of solver and loss. This makes sense since the initial learning rate mainly depends on the architecture of the network (and not so much on which particular images are fed to this network), which is very similar for the experiments on the CIFAR-10 and CIFAR-100 data sets.

\section{SVM Formulation \& Dual Derivation}
\label{sec:dual}

\paragraph{Multi-Class SVM} Suppose we are given a data set of $N$ samples, for which every sample $i$ has a feature vector $\phi_i \in \mathbb{R}^d$ and a ground truth label $y_i \in \mathcal{Y}$. For every possible label $\bar{y_i} \in \mathcal{Y}$, we introduce the augmented feature vector $\psi_i(\bar{y_i}) \in \mathbb{R}^{|\mathcal{Y}| \times d}$ containing $\phi_i$ at index $\bar{y_i}$, $-\phi_i$ at index $y_i$, and zeros everywhere else (then $\psi_i(y_i)$ is just a vector of zeros). We also define $\Delta(\bar{y_i}, y_i)$ as the loss by choosing the output $\bar{y_i}$ instead of the ground truth $y_i$ in our task. For classification, this is the zero-one loss for example.

The SVM optimization problem is formulated as:
\begin{align*}
\min\limits_{w, \xi_i} \quad
  &\dfrac{\lambda}{2} \| w\|^2 + \dfrac{1}{N} \sum\limits_{i=1}^N \xi_i \\
\text{subject to: } \quad
  &\forall i \in [N], \: \forall \bar{y_i} \in \mathcal{Y}, \: \xi_i \geq w^T \psi_i(\bar{y_i}) + \Delta(y_i, \bar{y_i}) \\
\end{align*}
Where $\lambda$ is the regularization hyperparameter. We now add a proximal term to a given starting point $w_0$:
\begin{align*}
\min\limits_{w, \xi_i} \quad
  &\dfrac{\lambda}{2} \| w\|^2 + \dfrac{\mu}{2} \| w - w_0\|^2 + \dfrac{1}{N} \sum\limits_{i=1}^N \xi_i \\
\text{subject to: } \quad
  &\forall i \in [N], \: \forall \bar{y_i} \in \mathcal{Y}, \: \xi_i \geq w^T \psi_i(\bar{y_i}) + \Delta(y_i, \bar{y_i}) \\
\end{align*}
Factorizing the second-order polynomial in $w$, we obtain the equivalent problem (changed by a constant):
\begin{align*}
\min\limits_{w, \xi_i} \quad
  &\dfrac{\lambda + \mu}{2} \| w  - \dfrac{\mu}{\lambda + \mu} w_0\|^2 + \dfrac{1}{N} \sum\limits_{i=1}^N \xi_i \\
\text{subject to: } \quad
  &\forall i \in [N], \: \forall \bar{y_i} \in \mathcal{Y}, \: \xi_i \geq w^T \psi_i(\bar{y_i}) + \Delta(y_i, \bar{y_i}) \\
\end{align*}

For simplicity, we introduce the ratio $\rho = \dfrac{\mu}{\lambda + \mu}$.

\paragraph{Dual Objective function}

The primal problem is:
\begin{align*}
\min\limits_{w, \xi_i} \quad
  &\dfrac{\lambda + \mu}{2} \| w  - \rho w_0\|^2 + \dfrac{1}{N} \sum\limits_{i=1}^N \xi_i \\
\text{subject to: } \quad
  &\forall i \in [N], \: \forall \bar{y_i} \in \mathcal{Y}, \: \xi_i \geq w^T \psi_i(\bar{y_i}) + \Delta(y_i, \bar{y_i}) \\
\end{align*}
The dual problem can be written as:
\begin{align*}
\max\limits_{\alpha \geq 0} \min\limits_{w, \xi_i} \quad \dfrac{\lambda + \mu}{2} \| w  - \rho w_0\|^2 + \dfrac{1}{N} \sum\limits_{i=1}^N \xi_i + \dfrac{1}{N} \sum\limits_{i=1}^N \sum\limits_{\bar{y_i} \in \mathcal{Y}} \alpha_i(\bar{y_i}) \left( \Delta(y_i, \bar{y_i})  + w^T \psi_i(\bar{y_i}) - \xi_i \right)
\end{align*}
Then we obtain the following KKT conditions:
\begin{align*}
\forall i \in [N], \: \dfrac{\partial \cdot}{\partial \xi_i} &= 0 \longrightarrow \sum\limits_{\bar{y_i} \in \mathcal{Y}} \alpha_i(\bar{y_i}) = 1 \\
\dfrac{\partial \cdot}{\partial w} &= 0 \longrightarrow w = \rho w_0 -  \underbrace{\dfrac{1}{N} \dfrac{1}{\lambda + \mu} \sum\limits_{i=1}^N \sum\limits_{\bar{y_i} \in \mathcal{Y}} \alpha_i(\bar{y_i}) \psi_i(\bar{y_i}) }_{A \alpha}
\end{align*}
We also introduce $b = \frac{1}{N}(\Delta(y_i, \bar{y_i}))_{i, \bar{y_i}}$. We define $\simplex$ as the sample-wise probability simplex:
\begin{align*}
u \in \simplex \text{ if:} \quad
  &\forall i \in [N], \: \forall \bar{y_i} \in \mathcal{Y}, u_i(\bar{y_i}) \geq 0 \\
  &\forall i \in [N], \: \sum\limits_{\bar{y_i} \in \mathcal{Y}} u_i(\bar{y_i}) =1
\end{align*}
We inject back and simplify to:
\begin{align*}
\max\limits_{\alpha \in \simplex} \quad
  &\dfrac{-(\lambda + \mu)}{2} \| A \alpha \|^2 + \mu w_0^T(A \alpha) + \alpha^T b
\end{align*}
Finally:
\begin{align*}
\min\limits_{\alpha \in \simplex} \quad &f(\alpha)
\end{align*}
Where:
\begin{equation*}
f(\alpha) \triangleq \dfrac{\lambda + \mu}{2} \| A \alpha \|^2 - \mu w_0^T(A \alpha) - \alpha^T b
\end{equation*}

\paragraph{BCFW derivation}

We write $\nabla_{(i)} f$ the gradient of $f$ w.r.t. the block (i) of variables in $\alpha$, padded with zeros on blocks $(j)$ for $j \neq i$. Similarly, $A_{(i)}$ and $b_{(i)}$ contain the rows of $A$ and the elements of $b$ for the block of coordinates $(i)$ and zeros elsewhere. We can write:
\begin{equation*}
\nabla_{(i)} f(\alpha) = (\lambda + \mu) A_{(i)}^T A \alpha - \mu A_{(i)} w_0 - b_{(i)}
\end{equation*}
Then the search corner for the block of coordinates $(i)$ is given by:
\begin{align*}
s_i
	&= \argmin\limits_{s_i'} \left(<s_i' , \nabla_{(i)} f (\alpha)> \right) \\
	&= \argmin\limits_{s_i'} \left( (\lambda + \mu) \alpha^T A^T A_{(i)} s_i' - \mu w_0 ^T A_{(i)} s_i' - b_{(i)}^T s_i' \right)
\end{align*}
We replace:
\begin{align*}
A \alpha &= \rho w_0 - w \\
A_{(i)} s_i' &= \dfrac{1}{N} \dfrac{1}{\lambda + \mu} \sum\limits_{\bar{y_i} \in \mathcal{Y}} s_i'(\bar{y_i}) \psi_i(\bar{y_i}) \\
b_{(i)}^T s_i' &= \dfrac{1}{N} \sum\limits_{\bar{y_i} \in \mathcal{Y}} s_i'(\bar{y_i}) \Delta(\bar{y_i}, y_i)
\end{align*}
We then obtain:
\begin{align*}
s_i
  &= \argmin\limits_{s_i'} \left( -(w - \rho w_0)^T \sum\limits_{\bar{y_i} \in \mathcal{Y}} s_i'(\bar{y_i}) \psi_i(\bar{y_i}) - w_0^T \rho \sum\limits_{\bar{y_i} \in \mathcal{Y}} s_i'(\bar{y_i}) \psi_i(\bar{y_i}) - \sum\limits_{\bar{y_i} \in \mathcal{Y}} s_i'(\bar{y_i}) \Delta(\bar{y_i}, y_i) \right) \\
   &= \argmax\limits_{s_i'} \left( w^T \sum\limits_{\bar{y_i} \in \mathcal{Y}} s_i'(\bar{y_i}) \psi_i(\bar{y_i}) + \sum\limits_{\bar{y_i} \in \mathcal{Y}} s_i'(\bar{y_i}) \Delta(\bar{y_i}, y_i) \right)
\end{align*}
As expected, this maximum is obtained by setting $s_i$ to one at $y_i^* = \argmax\limits_{\bar{y_i} \in \mathcal{Y}} \left( w^T \psi_i(\bar{y_i}) + \Delta(\bar{y_i}, y_i) \right)$ and zeros elsewhere.
We introduce the notation:
\begin{align*}
w_i &= -A_{(i)} \alpha_{(i)} \\
l_i &= b_{(i)}^T \alpha_{(i)} \\
w_s &= -A_{(i)} s_i \\
l_s &= b_{(i)}^T s_i
\end{align*}
Then we have:
\begin{align*}
w_s &= -\dfrac{1}{N} \dfrac{1}{\lambda + \mu} \psi(y_i^*) = -\dfrac{1}{N} \dfrac{1}{\lambda + \mu} \dfrac{\partial H_i(y_i^*)}{\partial w} \\
l_s &= \dfrac{1}{N} \Delta(y_i, y_i^*)
\end{align*}
The optimal step size in the direction of the block of coordinates $(i)$ is given by :
\begin{equation*}
\gamma^* = \argmin\limits_{\gamma} f(\alpha + \gamma (s_i - \alpha_i))
\end{equation*}
The optimal step-size is given by:
\begin{equation*}
\gamma^* = \dfrac{<\nabla_{(i)} f(\alpha), s_i - \alpha_i>}{(\lambda + \mu) \| A (s_i - \alpha_i) \|^2}
\end{equation*}
We introduce $w_d = - A \alpha = w -  \rho w_0 $. Then we obtain:
\begin{align*}
\gamma^*
	&= \dfrac{(w_i - w_s)^T (w -  \rho w_0)  + \rho w_0^T(w_i - w_s) - \frac{1}{\lambda + \mu}(l_i - l_s)}{\| w_i - w_s \|^2} \\
	&= \dfrac{(w_i - w_s)^T w - \frac{1}{\lambda + \mu}(l_i - l_s)}{\| w_i - w_s \|^2}
\end{align*}
And the updates are the same as in standard BCFW:
\begin{algorithm}[H]
\caption{\em BCFW with warm start}\label{bcfw_warm}
\begin{algorithmic}[1]
\STATE Let $w^{(0)}=w_0, \quad \forall i \in [N], \: \quad w_i^{(0)} = 0$
\STATE Let $l^{(0)}=0, \quad \forall i \in [N], \: \quad l_i^{(0)} = 0$
\FOR {k=0...K}
\STATE Pick $i$ randomly in $\{1,..,n \}$
\STATE Get $y_i^* = \argmax\limits_{\ybar_i \in \mathcal{Y}} H_i(\ybar_i, w^{(k)})$ and $w_s = - \dfrac{1}{N} \dfrac{1}{\lambda + \mu} \dfrac{\partial H_i(y_i^*, w^{(k)})}{\partial w^{(k)}}$
\STATE $l_s = \frac{1}{N} \Delta(y_i^*, y_i)$
\STATE $\gamma = \dfrac{(w_i - w_s)^T w - \frac{1}{\lambda + \mu}(l_i - l_s)}{\| w_i - w_s \|^2}$ clipped to [0, 1]
\STATE $w_i^{(k+1)} = (1 - \gamma) w_i^{(k)} + \gamma w_s$
\STATE $l_i^{(k+1)} = (1 - \gamma) l_i^{(k)} + \gamma l_s$
\STATE $w^{(k+1)} = w^{(k)} + w_i^{(k+1)} - w_i^{(k)} =  w^{(k)} + \gamma (w_s^{(k)} - w_i^{(k)})$
\STATE $l^{(k+1)} = l^{(k)} + l_i^{(k+1)} - l_i^{(k)}$
\ENDFOR
\end{algorithmic}
\end{algorithm}

In particular, we have proved Proposition (\ref{prop:warm_start}) in this section: $w$ is initialized to $\rho w_0$ (KKT conditions), and the direction of the conditional gradient, $w_s$, is given by $\dfrac{\partial H_i(y_i^*)}{\partial w}$, which is independent of $w_0$.

\paragraph{} Note that the derivation of the Lagrangian dual has introduced a dual variable $\alpha_i(\bar{y_i})$ for each linear constraint of the SVM problem (this can be replaced by $\alpha_i(\overline{\bf h}_i, (\bar{y_i}))$ if we consider latent variables). These dual variables indicate the complementary slackness not only for the output class $\bar{y_i}$, but also for each of the activation which defines a piece of the piecewise linear hinge loss. Therefore a choice of $\alpha$ defines a path of activations.

\section{Sensitivity of SGD Algorithms}

Here we discuss some weaknesses of the SGD-based algorithms that we have encountered in practice for our learning objective function. These behaviors have been observed in the case of PL-CNNs, and generally may not appear in different architectures (in particular the failure to learn with high regularization goes away with the use of batch normalization layers).

\subsection{Initial Learning Rate}

As mentioned in the experiments section, the choice of the initial learning rate is critical for good performance of all Adagrad, Adadelta and Adam. When the learning rate is too high, the network does not learn anything and the training and validating accuracies are stuck at random level. When it is too low, the network may take a considerably greater number of epochs to converge.

\subsection{Failures to Learn}

\paragraph{Regularization} When the regularization hyper-parameter $\lambda$ is set to a value of 0.01 or higher on CIFAR-10, SGD solvers get trapped in a local minimum and fail to learn. The SGD solvers indeed fall in the local minimum of shutting down all activations on ReLUs, which provide zero-valued feature vector to the SVM loss layer (and a hinge loss of one). As a consequence, no information can be back-propagated. We plot this behavior below:

\begin{figure}[H]
\centering
\label{fig:fail}
\includegraphics[scale=0.4]{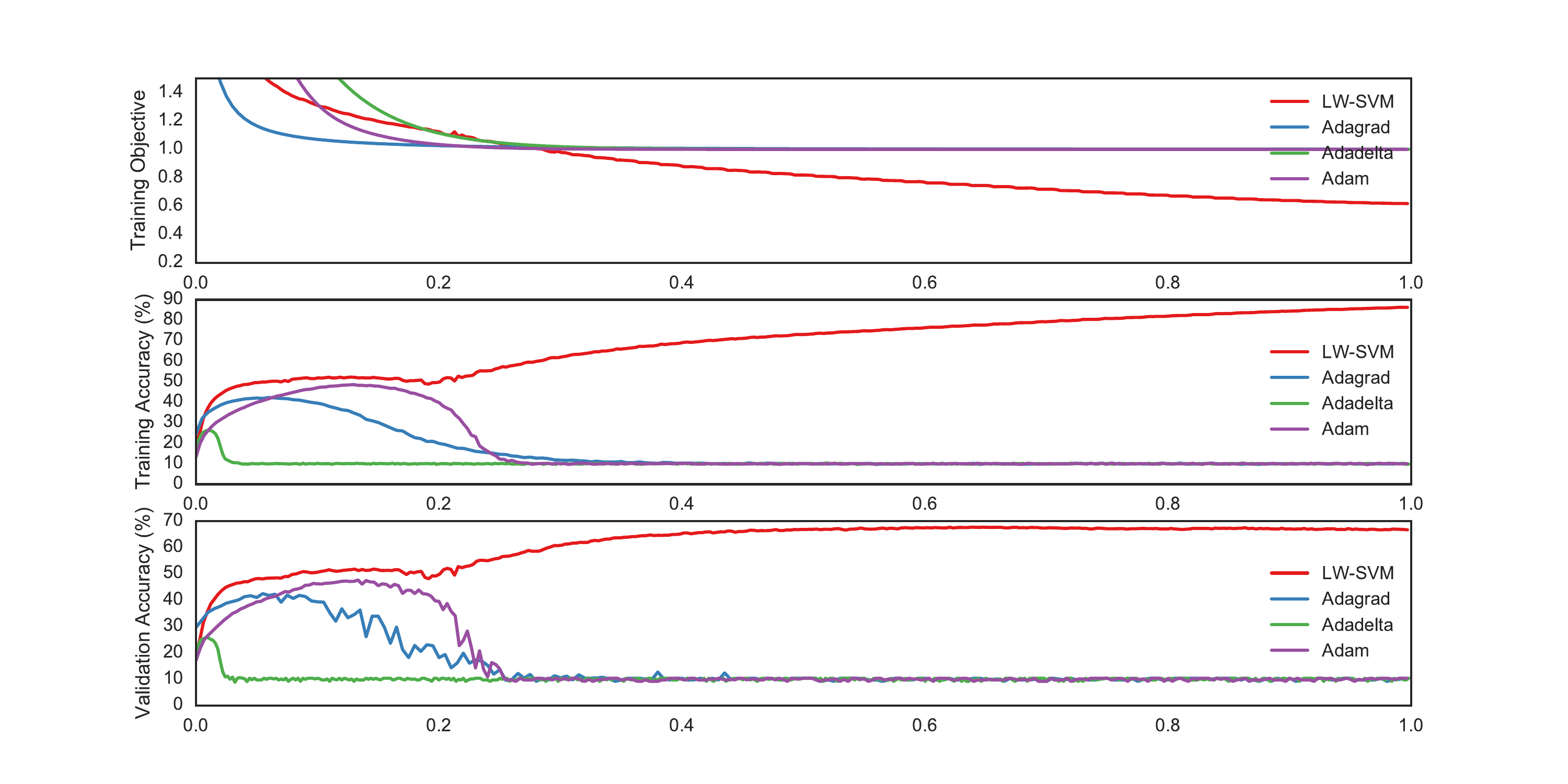}
\caption{\em Behavior of different algorithms for  $\lambda=0.01$. The x-axis has been rescaled to compare the evolution of all algorithms (real training times vary between half an hour to a few hours for the different runs).}
\end{figure}

In this situation, the network is at a bad saddle point (note that the training and validation accuracies are stuck at random levels). Our algorithm does not fall into such bad situations, however it is not able to get out of it either: each layer is at a pathological critical point of its own objective function, which makes our algorithm unable to escape from it.

With a lower initial learning rate, the evolution is slower, but eventually the solver goes back to the bad situation presented above.

\paragraph{Biases} The same failing behavior as above has been observed when not using the biases in the network. Again our algorithm is robust to this change.

\end{document}